\ificcvfinal\pagestyle{empty}\fi
\DeclareMathOperator*{\argmin}{arg\,min}
\newcommand{\mrm}[1]{\mathrm{#1}}
\newcommand{\RR}{\mathbb R}
\newcommand{\mc}[1]{\mathcal{#1}}
\newcommand{\supscript}[1]{\textsuperscript{#1}}
\newcommand{\mbf}[1]{\mathbf{#1}}
\def \figpath {figures/}
\def\C{\mathbf{C}}
\def\A{\mathbf{A}}
\def\Re{\mathbb{R}}
\def\X{\mathbf{X}}
\newcommand{\myparagraph}[1]{\smallskip \noindent \textbf{#1}}
\newtheorem{proposition}{Proposition}
\newtheorem{lemma}{Lemma}
\begin{document}

\title{Doubly Stochastic Subspace Clustering}

\author{Derek Lim\\
Department of Computer Science\\
Cornell University\\
{\tt\small dl772@cornell.edu}
\and
Ren\'{e} Vidal \qquad \qquad Benjamin D. Haeffele\\
Mathematical Institute for Data Science\\
Johns Hopkins University\\
{\tt\small \{rvidal, bhaeffele\}@jhu.edu}
}

\maketitle

\begin{abstract}
Many state-of-the-art subspace clustering methods follow a two-step process by first constructing an affinity matrix between data points and then applying spectral clustering to this affinity. Most of the research into these methods focuses on the first step of generating the affinity, which often exploits the self-expressive property of linear subspaces, with little consideration typically given to the spectral clustering step that produces the final clustering. Moreover, existing methods often obtain the final affinity that is used in the spectral clustering step by applying ad-hoc or arbitrarily chosen postprocessing steps to the affinity generated by a self-expressive clustering formulation, which can have a significant impact on the overall clustering performance. In this work, we unify these two steps by learning both a self-expressive representation of the data and an affinity matrix that is well-normalized for spectral clustering. In our proposed models, we constrain the affinity matrix to be doubly stochastic, which results in a principled method for affinity matrix normalization while also exploiting known benefits of doubly stochastic normalization in spectral clustering. We develop a general framework and derive two models: one that jointly learns the self-expressive representation along with the doubly stochastic affinity, and one that sequentially solves for one then the other. Furthermore, we leverage sparsity in the problem to develop a fast active-set method for the sequential solver that enables efficient computation on large datasets. Experiments show that our method achieves state-of-the-art subspace clustering performance on many common datasets in computer vision.
\end{abstract}

\section{Introduction}

Subspace clustering seeks to cluster a set of data points that are approximately drawn
from a union of low dimensional linear (or affine) subspaces into clusters,
where each linear (or affine) subspace defines a cluster
(i.e., every point in a given cluster lies in the same subspace) \cite{vidal2011subspace}.
The most common class of subspace clustering algorithms
for clustering a set of $n$ data points proceed
in two stages: 1) Learning an affinity matrix $\A \in \Re^{n \times n}$
that defines the similarity between pairs of datapoints; 
2) Applying a graph clustering technique, such as
spectral clustering \cite{von2007tutorial,shi2000normalized},
to produce the final clustering.  
In particular, arguably the most
popular model for subspace clustering is to construct the affinity matrix
by exploiting the `self-expressive' property of linear (or affine)
subspaces, where a point within a given subspace can be represented
as a linear combination of other points within the subspace
\cite{elhamifar2009sparse, elhamifar2013sparse, Lu12, you2016oracle, you2016scalable, vidal2011subspace}.
For a dataset $\X \in \Re^{d \times n}$ of $n$, $d$-dimensional data points,
this is typically captured by an optimization problem of the form:
\begin{equation}
\label{eq:basic_form}
\min_{\C} \tfrac{1}{2} \|\X - \X\C\|_F^2 + \lambda \theta(\C),
\end{equation}
where the first term captures the self-expressive property, $\X \approx \X\C$,
and the second term, $\theta$, is some regularization term on $\C$ to
encourage that a given point is primarily represented by other points from
its own subspace and to avoid trivial solutions
such as $\C = \mbf{I}$.   Once the self-expressive representation
$\C$ has been learned from \eqref{eq:basic_form},
the final affinity $\A$ is then typically constructed
by rectifying and symmetrizing $\C$, e.g., $\A = (|\C| + |\C^\top|)/2$.  

\begin{figure*}[ht]
	\centering
	\includegraphics[width=1.8\columnwidth]{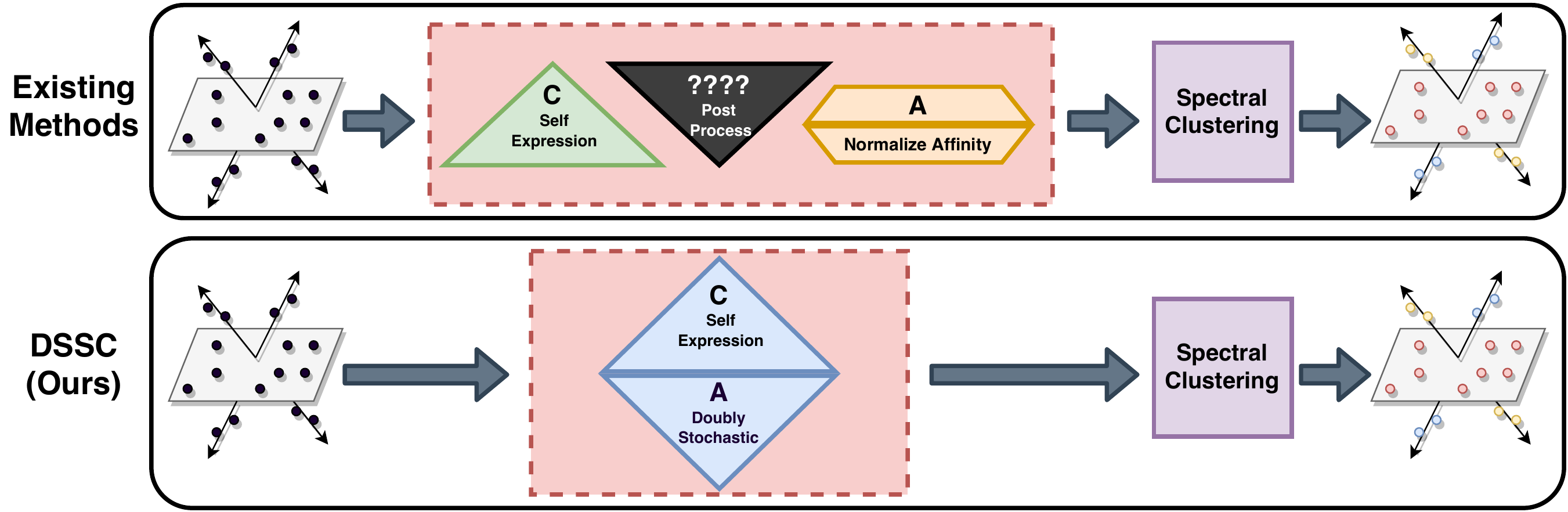}
	\caption{Diagram comparing our DSSC framework with existing methods for self-expressive affinity-based subspace clustering.
	(Top row) Most existing methods focus on computing the self-expressive matrix $\mbf{C}$,
			but they are also are reliant on choices of postprocessing and affinity normalization,
			which often take the form of ad-hoc procedures that are not well-studied.
	(Bottom row) Our DSSC models learn a doubly stochastic affinity matrix
	$\mbf{A} \in \boldsymbol{\Omega}_n$ along with the
	self-expressive matrix $\mbf{C}$. 
	The doubly stochastic affinity does not require postprocessing
or normalization to be used for spectral clustering and has numerous desirable properties
for subspace clustering.}
\label{fig:diagram}
\end{figure*}

However, as we detail in Section~\ref{sec:motivation} and illustrate in Figure~\ref{fig:diagram}, many subspace clustering methods require ad-hoc or unjustified postprocessing procedures on $\mbf{C}$ to work in practice. 
These postprocessing steps serve to normalize $\mbf{C}$ to produce an affinity $\mbf{A}$ with better spectral clustering performance, but they add numerous arbitrary
hyperparameters to the models and receive little mention in the associated papers.  Likewise, it is well-established that some form of Laplacian normalization is needed for spectral clustering to be successful \cite{von2007tutorial}, for which the practitioner again has multiple choices of normalization strategy. 

In this paper, we propose a new subspace clustering framework which explicitly connects the self-expressive step, the affinity normalization, and the spectral clustering step. We develop novel scalable methods for this framework that address issues with and empirically outperform other subspace clustering models. To motivate our proposed models, we first discuss the desired properties of subspace clustering affinities that one would like for successful spectral clustering.
Specifically, we desire affinities $\mbf{A}$ that have the following properties:
\begin{itemize}[noitemsep,topsep=3pt]
	\item[(A1)] \textit{Well-normalized for spectral clustering}. We should better leverage
		knowledge that $\mbf{A}$ will be input to spectral clustering.
		Many forms of ad-hoc postprocessing in subspace clustering
		perform poorly-justified normalization, and there is also
		a choice of normalization to be made in forming the graph Laplacian \cite{von2007tutorial}.
	\item[(C1)] \textit{Sparsity}. Much success in subspace clustering
		has been found with enforcing sparsity on $\mbf{C}$
	\cite{elhamifar2009sparse, elhamifar2013sparse, you2016scalable}.
	In particular, many sparsity-enforcing methods are designed to ensure that the nonzero entries of the affinity matrix correspond to pairs of points which belong to the same subspace.  This leads to high clustering accuracy,
	desirable computational properties, and provable theoretical guarantees.
	\item[(C2)] \textit{Connectivity}. There should be sufficiently many edges
		in the underlying graph of $\mbf{A}$ so that all points within the
		same subspace are connected
		\cite{nasihatkon2011graph, wang2016graph, you2016oracle}.
		Thus, there is a trade-off between sparsity and connectivity that subspace clustering methods
		must account for.
\end{itemize}

Properties (C1) and (C2) are well studied in the subspace clustering literature, and can be enforced on the self-expressive $\mbf{C}$, since rectifying and symmetrizing a $\mbf{C}$ with these properties, for example, maintains the properties. However, property (A1) must be enforced on $\mbf{A}$; thus, (A1) is often neglected in several aspects (see Section~\ref{sec:motivation}),
and is mostly handled by ad-hoc postprocessing methods, if handled at all. 
In working towards (A1), we first constrain $\mbf{A} \geq \mbf{0}$, since nonnegativity of the affinity is necessary for interpretability and alignment with spectral clustering theory. Beyond this, spectral clustering also benefits from having rows and columns in the affinity matrix of the same scale, 
and many forms of Laplacian normalization for spectral clustering normalize rows and/or columns to have similar scale  \cite{von2007tutorial}.  
In particular, one form of normalization that is well established in the spectral clustering literature is to constrain the rows and columns to have unit $l_1$ norm. Because $\mbf{A}$ is additionally constrained to be nonnegative, this is equivalent to requiring each row sum and column sum of $\mbf{A}$ to be 1, resulting in constraints that restrict the affinities $\mbf{A}$ to be in the convex set of doubly stochastic matrices \cite{horn2012matrix}: 
\begin{equation}
	\boldsymbol{\Omega}_n = \{ \mbf{A} \in \RR^{n \times n} \mid \mbf{A} \geq \mbf{0}, \ \mbf{A}\mbf{1} = \mbf{1}, \ \mbf{A}^\top \mbf{1} = \mbf{1} \}.
\end{equation}
Doubly stochastic matrices have been thoroughly studied for spectral clustering, and doubly stochastic normalization has been shown to significantly improve the performance of spectral clustering
\cite{zass2005unifying, zass2007doubly, wang2016structured,
nie2016constrained, harlev2020doubly, jankowski2020spectra, wang2010learning, landa2020doubly, yan2014efficient}.
Moreover, doubly stochastic matrices are invariant to the most widely-used types of Laplacian normalization \cite{von2007tutorial}, which removes the need to choose a Laplacian normalization scheme.  Further, the authors of \cite{zass2007doubly} show that various Laplacian normalization schemes can be viewed as attempting to approximate a given affinity matrix with a doubly stochastic matrix under certain distance metrics.

Beyond being well-normalized for spectral clustering (A1), the family of doubly stochastic matrices can also satisfy properties (C1) and (C2).  
Our proposed methods will give control over the sparsity-connectivity trade-off through interpretable parameters, and we in practice learn $\mbf{A}$ that are quite sparse.
Additionally, doubly stochastic matrices have a guarantee of a certain level of connectivity due to the row sum constraint, which prohibits solutions with all-zero rows (that can occur in other subspace clustering methods). The convexity of $\boldsymbol{\Omega}_n$ along with the sparsity of our learned $\mbf{A}$ allow us to develop novel scalable algorithms for doubly stochastic projection and hence scalable methods for subspace clustering with doubly stochastic affinities.

\myparagraph{Contributions.}
In this work, we develop a framework that unifies the self-expressive representation step of subspace clustering with the spectral clustering step by formulating a model that jointly solves for a self-expressive representation $\mbf{C}$ and a doubly stochastic affinity matrix $\mbf{A}$.
While our general model is non-convex, we provide a convex relaxation that is provably close to the non-convex model and that can be solved by a type of linearized ADMM \cite{ma2016alternating}. A closer analysis also allows us to formulate a sequential algorithm to quickly compute an approximate solution, in which we first efficiently learn a self-expressive matrix and then subsequently fit a doubly stochastic matrix by a regularized optimal transport problem \cite{blondel2018smooth, lorenz2019quadratically}. We leverage inherent sparsity in the problem to develop a scalable active-set method for computing in this sequential model. 
Finally, we validate our approach with experiments on various standard subspace clustering datasets, where we demonstrate that our models significantly improve on the current state-of-the-art in a variety of performance metrics.


\subsection{Related Work}

\myparagraph{Self-expressive and Affinity Learning.} 
Existing methods that have attempted to unify the self-expressive and spectral clustering steps in subspace clustering include: Structured SSC \cite{li2017structured}, which jointly learns a self-expressive representation and a binary clustering matrix, and Block Diagonal Representation \cite{lu2018subspace} and SC-LALRG \cite{yin2018subspace}, which jointly learn a self-expressive representation and a constrained affinity for spectral clustering. However, all of these methods are non-convex, require iterative optimization methods, and necessitate expensive spectral computations in each iteration. As such, their scalability is greatly limited. In contrast, our methods have convex formulations, and we can compute the true minimizers very efficiently with our developed algorithms.

\myparagraph{Doubly Stochastic Clustering.}
Doubly stochastic constraints have been used in standard spectral clustering methods where the input affinity is known, but they have not been used to directly learn an affinity matrix for a self-expressive representation, as is desired in subspace clustering.

Specifically, Zass and Shashua find strong performance by applying standard spectral clustering to the nearest doubly stochastic matrix to the input affinity in Frobenius norm \cite{zass2007doubly}. Also, there has been work on learning doubly stochastic approximations of input affinity matrices subject to rank constraints \cite{nie2016constrained, wang2016structured, yang2016low}. Further, Landa \textit{et al.} show that doubly stochastic normalization of the ubiqitous Gaussian kernel matrix by diagonal matrix scaling is robust to heteroskedastic additive noise on the data points \cite{landa2020doubly}.

To the best of our knowledge, only one subspace clustering method \cite{lee2015membership} utilizes doubly stochastic constraints, but they enforce very expensive semi-definite constraints, use ad-hoc non-convex postprocessing, and do not directly apply spectral clustering to the doubly stochastic matrix. In contrast, we develop scalable methods with principled convex formulations that do not require postprocessing before applying spectral clustering.

\myparagraph{Scalable Subspace Clustering.} 
Existing methods or making subspace clustering scalable leverage sparsity in the self-expressive representation \cite{chen2020stochastic,you2016oracle,dyer2013greedy,you2016scalable}, leverage structure in the final affinity \cite{peng2013scalable,adler2015linear}, and/or use greedy heuristics \cite{dyer2013greedy,you2016scalable,you2016divide} to efficiently compute subspace clusterings.
Our scalable method is able to exploit sparsity in both the self-expressive representation and affinity construction, is easily parallelizable (allowing a simple and fast GPU implementation), and is fully convex, with no use of greedy heuristics.

\section{Doubly Stochastic Models}

In this section, we further detail the motivation for learning doubly stochastic
affinities for subspace clustering. Then we develop two models for subspace
clustering with doubly stochastic affinities. Our J-DSSC model 
jointly learns a self-expressive matrix and doubly stochastic affinity,
while our A-DSSC model is a fast approximation that sequentially solves for a self-expressive
matrix and then a doubly stochastic affinity that approximates it.

\subsection{Benefits of Doubly Stochastic Affinities} \label{sec:motivation}

As discussed above, existing subspace clustering methods such as SSC \cite{elhamifar2013sparse}, LRR \cite{liu2012robust}, EDSC \cite{ji2014efficient}, and deep subspace clustering networks (DSC-Net) \cite{ji2017deep} require various ad-hoc postprocessing methods to achieve strong clustering performance on certain datasets.
For example, it has recently been shown that if one removes the ad-hoc postprocessing steps from DSC-Net~\cite{ji2017deep} then the performance drops considerably --- performing no better than simple baseline methods and far below state-of-the-art \cite{haeffele2021critique}. Common postprocessing steps include keeping only the top $l$ entries of each column of $\mbf{C}$, normalizing columns of $\mbf{C}$, and/or using SVD-based postprocessing, each of which introduces extra hyperparameters and degrees of freedom for the practitioner. 
Likewise, the practitioner is also required to make a choice on the particular type of Laplacian normalization to use \cite{von2007tutorial}.

To better connect the self-expressive step with the subsequent spectral clustering step and to avoid the need for ad-hoc postprocessing methods, we propose a framework for directly learning an affinity $\mbf{A}$ that already has desired properties (A1) and (C1)-(C2) for spectral clustering. Restricting $\mbf{A} \in \boldsymbol{\Omega_n}$ to be in the convex set of doubly stochastic matrices achieves
these goals while allowing for highly efficient computation.

As noted in the introduction, doubly stochastic matrices are already nonnegative, so we do not need to take absolute values of some computed matrix. Also, each row and column of a doubly stochastic matrix sums to one, so they each have the same scale in $l_1$ norm --- removing the need to postprocess by scaling rows or columns. Importantly for (A1), doubly stochastic matrices are invariant to most forms of Laplacian normalization used in spectral clustering.  For example, for a symmetric affinity matrix $\mbf{A}$ with row sums $\mbf{D} = \mrm{diag}(\mbf{A} \mbf{1})$, widely used Laplacian variants include the unnormalized Laplacian $\mbf{D} - \mbf{A}$, the normalized Laplacian $\mbf{I} - \mbf{D}^{-1/2} \mbf{A} \mbf{D}^{-1/2}$, and the random walk Laplacian $\mbf{I} - \mbf{D}^{-1} \mbf{A}$ \cite{von2007tutorial}. When $\mbf{A}$ is doubly stochastic, the matrix of row sums satisfies $\mbf{D} = \mbf{I}$, so all of the normalization variants are equivalent and give the same Laplacian~$\mbf{I} - \mbf{A}$.

In addition, many types of regularization and constraints that have been proposed for subspace clustering tend to desire sparsity (C1) and connectivity (C2), in the sense that they want $\mbf{A}_{ij}$ to be small in magnitude or zero when $\mbf{x}_i$ and $\mbf{x}_j$ belong to different subspaces and to be nonzero for sufficiently many pairs $(i,j)$ where $\mbf{x}_i$ and $\mbf{x}_j$ belong to the same subspace \cite{nasihatkon2011graph,wang2016graph}. The doubly stochastic matrices learned by our models can be tuned to achieve any desired sparsity level. Also, doubly stochastic affinities are guaranteed a certain level of connectedness, as
each row must sum to $1$. This means that there cannot be any zero rows in the learned affinity, unlike in methods that compute representations one column at a time such as SSC \cite{elhamifar2013sparse}, EnSC \cite{you2016oracle}, and SSC-OMP \cite{you2016scalable}, where it is possible that a point is never used in the self-expressive representation of other points. 

\subsection{Joint Learning: J-DSSC}\label{sec:jdssc}

In developing our model to jointly learn a self-expressive matrix and doubly stochastic affinity, we build off of the general regularized self-expression form in \eqref{eq:basic_form}.  In addition to learning a self-expressive matrix $\C$ we also wish to learn a doubly stochastic affinity matrix $\mbf{A} \in \boldsymbol{\Omega}_n$. Self-expressive formulations (roughly) model $\abs{\mbf{C}_{ij}}$ as proportional to the likelihood that $\mbf{x}_i$ and $\mbf{x}_j$ are in the same subspace, so we desire that our normalized affinity $\mbf{A}$ be close to $\abs{\mbf{C}}$ (optionally after scaling $\A$ by a constant to match the scale of $\abs{\C}$), which we incorporate via the use of a penalty function $\Theta(\cdot,\cdot)$. Thus, we have a general framework:
\begin{equation}
	\begin{aligned}
	& \min_{\mbf{C}, \mbf{A}} \frac{1}{2}\norm{\mbf{X} - \mbf{X}\mbf{C}}_F^2 + \lambda \theta(\mbf{C}) + \gamma \Theta(\abs{\mbf{C}}, \mu \mbf{A}) \\ 
	& \mrm{s.t.} \ \mbf{A} \in \boldsymbol{\Omega}_n, \ \mrm{diag}(\mbf{C}) = \mbf{0}
\end{aligned}
\end{equation}
The zero-diagonal constraint on $\mbf{C}$ is enforced to prevent each point from using itself in its self-representation, as this is not informative for clustering.
While our framework is general, and can be used with e.g. low-rank penalties, we choose a simple mix of $l_2$ and $l_1$ regularization that is effective and admits fast algorithms.
For $\Theta$, we use an $l_2$ distance penalty, as $l_2$ projection of an affinity matrix onto the doubly
stochastic matrices often improves spectral clustering performance \cite{zass2007doubly} and results in desirable sparsity properties \cite{rontsis2020optimal,blondel2018smooth}. As a result, our model takes the form of an optimization problem over $\mbf{C}$ and $\mbf{A}$:
\begin{equation}\label{eq:absC}
\begin{aligned}
	\min_{\mbf{C}, \mbf{A}} &~~ \frac{1}{2}\norm{\mbf{X} - \mbf{X}\mbf{C}}_F^2 
	 + \frac{\eta_1}{2}\norm{\abs{\mbf{C}} - \eta_2 \mbf{A}}_F^2 + \eta_3 \norm{\mbf{C}}_1\\
	\mrm{s.t.} &~~ \mbf{A} \in \boldsymbol\Omega_n, \ \mrm{diag}(\mbf{C}) = \mbf{0},
\end{aligned}
\end{equation}
where $\eta_1, \eta_2 > 0, \eta_3 \geq 0$ are hyperparameters.

The objective \eqref{eq:absC} is not convex because of the $\abs{\mbf{C}}$ term in the $\mbf{C}$-to-$\mbf{A}$-difference loss. To alleviate this issue, we relax
the problem by separating the self-expressive matrix $\C$ into two nonnegative matrices $\mbf{C}_p, \mbf{C}_q \geq \mbf{0}$, so that ${\mbf{C}_p - \mbf{C}_q}$ approximately takes the role of $\mbf{C}$ and ${\mbf{C}_p + \mbf{C}_q}$ approximately takes the role of $\abs{\mbf{C}}$ (with the approximation being exact if the nonzero support of $\mbf{C}_p$ and $\mbf{C}_q$ do not overlap). Thus, our convex model is given by:
\begin{equation}\label{eq:model}
\begin{aligned}
	\min_{\mbf{C}_p, \mbf{C}_q, \mbf{A}} & ~ \frac{1}{2} \norm{\mbf{X}-\mbf{X}[\mbf{C}_p-\mbf{C}_q]}_F^2 + \\
	& ~\frac{\eta_1}{2}\norm{[\mbf{C}_p+\mbf{C}_q] - \eta_2 \mbf{A}}_F^2 + \eta_3 \norm{\mbf{C}_p+\mbf{C}_q}_1 \!\\
	 \mrm{s.t.} ~ & ~ \mbf{A} \in \boldsymbol\Omega_n, \ \mbf{C}_p, \mbf{C}_q \in \RR^{n \times n}_{\geq 0, \mrm{diag}=0}
\end{aligned}
\end{equation}
where $\RR^{n \times n}_{\geq 0, \mrm{diag}=0}$ is the set of $n \times n$ real matrices with nonnegative entries and zero diagonal\footnote{ Note that the $l_1$ norm $\norm{\mbf{C}_p + \mbf{C}_q}_1$ can be replaced by the
sum of the entries of $\mbf{C}_p + \mbf{C}_q$ because of the nonnegativity constraints.}.
This problem is now convex and can be efficiently solved to global optimality, as we discuss in Section \ref{sec:joint_opt}. We refer to this model as Joint Doubly Stochastic Subspace Clustering (J-DSSC).

If the optimal $\mbf{C}_p$ and $\mbf{C}_q$ have disjoint support (where the support of a matrix is the set of indices $(i,j)$ where the matrix is nonzero), then by solving \eqref{eq:model} we also obtain a solution for \eqref{eq:absC}, since we can take $\mbf{C} = \mbf{C}_p - \mbf{C}_q$, in which case $\abs{\mbf{C}} = \abs{\mbf{C}_p - \mbf{C}_q} = \mbf{C}_p + \mbf{C}_q$. However, this final equality does not hold when the optimal $\mbf{C}_p$ and $\mbf{C}_q$ have overlapping nonzero support. The following proposition shows that our relaxation \eqref{eq:model} is equivalent to the original problem \eqref{eq:absC} for many parameter settings of $(\eta_1,\eta_2,\eta_3)$.
In cases where the supports of $\mbf{C}_p$ and $\mbf{C}_q$ overlap, we can also bound the magnitude of the overlapping entries (hence guaranteeing a close approximation of the solution to \eqref{eq:absC} from solutions to \eqref{eq:model}). A proof is given in the appendix.
\begin{proposition}\label{prop:supports}
	Consider  \eqref{eq:model} with parameters $\eta_1 > 0$ and $\eta_2, \eta_3 \geq 0$. Let $(\mbf{C}_p^*, \mbf{C}_q^*, \mbf{A}^*)$ be an optimal solution.
	\begin{itemize}
		\item[1)] If $\eta_1  \eta_2 \leq \eta_3$, then the solution of the relaxation \eqref{eq:model} is the same as that of \eqref{eq:absC}, when taking $\mbf{C} = \mbf{C}_p^* - \mbf{C}_q^*$. In particular, the supports of $\mbf{C}_p^*$ and $\mbf{C}_q^*$ are disjoint.
		\item[2)] If $\eta_1  \eta_2 > \eta_3$, then the supports of $\mbf{C}_p^*$ and $\mbf{C}_q^*$ may overlap. At any index $(i,j)$ for which $(\mbf{C}_p^*)_{ij} > 0$ and $(\mbf{C}_q^*)_{ij} > 0$, it holds that
			\begin{equation}\label{eq:overlap_support}
			\max \left((\mbf{C}_p^*)_{ij}, \; (\mbf{C}_q^*)_{ij} \right) < \frac{\eta_1  \eta_2 - \eta_3}{\eta_1}.
		\end{equation}
	\end{itemize}
\end{proposition}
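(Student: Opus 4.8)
The plan is to reduce \eqref{eq:model} to a family of one‑dimensional problems. The first term $\tfrac12\norm{\mbf{X}-\mbf{X}[\mbf{C}_p-\mbf{C}_q]}_F^2$ depends on $(\mbf{C}_p,\mbf{C}_q)$ only through $\mbf{C}_p-\mbf{C}_q$, and by the footnote $\eta_3\norm{\mbf{C}_p+\mbf{C}_q}_1$ equals $\eta_3$ times the sum of the entries of $\mbf{C}_p+\mbf{C}_q$. Hence, holding $\mbf{A}$ fixed and the difference matrix $\mbf{C}_p-\mbf{C}_q$ fixed, the objective is a constant plus $\tfrac{\eta_1}{2}\norm{(\mbf{C}_p+\mbf{C}_q)-\eta_2\mbf{A}}_F^2+\eta_3\norm{\mbf{C}_p+\mbf{C}_q}_1$, which decouples over entries; the only freedom left at entry $(i,j)$ is the scalar $t=(\mbf{C}_p)_{ij}+(\mbf{C}_q)_{ij}$, constrained to $t\ge|s_{ij}|$ with $s_{ij}=(\mbf{C}_p)_{ij}-(\mbf{C}_q)_{ij}$ (on the diagonal both entries are pinned to $0$, so no overlap can occur there). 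Since re‑splitting a single off‑diagonal entry this way preserves nonnegativity and the zero diagonal and does not touch $\mbf{A}$, optimality of $(\mbf{C}_p^*,\mbf{C}_q^*,\mbf{A}^*)$ forces, for every off‑diagonal $(i,j)$,
\begin{equation*}
t_{ij}^*:=(\mbf{C}_p^*)_{ij}+(\mbf{C}_q^*)_{ij}\ \in\ \argmin_{t\,\ge\,|s_{ij}^*|}\ g_{ij}(t),\qquad g_{ij}(t):=\tfrac{\eta_1}{2}\big(t-\eta_2\mbf{A}^*_{ij}\big)^2+\eta_3 t .
\end{equation*}

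Next I would solve this scalar convex problem. The parabola $g_{ij}$ has unconstrained minimizer $\eta_2\mbf{A}^*_{ij}-\eta_3/\eta_1$, so its minimizer over $[\,|s_{ij}^*|,\infty)$ is $|s_{ij}^*|$ when $\eta_2\mbf{A}^*_{ij}-\eta_3/\eta_1\le|s_{ij}^*|$ and is $\eta_2\mbf{A}^*_{ij}-\eta_3/\eta_1$ otherwise. The key quantitative input is that $\mbf{A}^*\in\boldsymbol\Omega_n$ forces $\mbf{A}^*_{ij}\le1$ (a nonnegative row summing to one), hence $\eta_2\mbf{A}^*_{ij}\le\eta_2$. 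For part~1, if $\eta_1\eta_2\le\eta_3$ then $\eta_2\mbf{A}^*_{ij}-\eta_3/\eta_1\le\eta_2-\eta_3/\eta_1\le0\le|s_{ij}^*|$, so $t_{ij}^*=|s_{ij}^*|$; since $(\mbf{C}_p^*)_{ij},(\mbf{C}_q^*)_{ij}\ge0$ sum to the magnitude of their difference, one of them must vanish, i.e.\ $\mbf{C}_p^*$ and $\mbf{C}_q^*$ have disjoint support. Then $\mbf{C}^*:=\mbf{C}_p^*-\mbf{C}_q^*$ satisfies $\abs{\mbf{C}^*}=\mbf{C}_p^*+\mbf{C}_q^*$, so $(\mbf{C}^*,\mbf{A}^*)$ is feasible for \eqref{eq:absC} with the same objective value. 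Since conversely any feasible $(\mbf{C},\mbf{A})$ of \eqref{eq:absC} lifts to the feasible point $(\mbf{C}_+,\mbf{C}_-,\mbf{A})$ of \eqref{eq:model} (positive/negative parts, which have disjoint support and zero diagonal) with identical objective, the optimal value of \eqref{eq:model} is at most that of \eqref{eq:absC}; combining the two inequalities shows the optimal values coincide and $(\mbf{C}^*,\mbf{A}^*)$ solves \eqref{eq:absC}.

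For part~2, assume $\eta_1\eta_2>\eta_3$ and that $(\mbf{C}_p^*)_{ij}>0$ and $(\mbf{C}_q^*)_{ij}>0$ at some $(i,j)$. Then $t_{ij}^*=(\mbf{C}_p^*)_{ij}+(\mbf{C}_q^*)_{ij}>|s_{ij}^*|$ strictly, so the constraint in the scalar problem is inactive and $t_{ij}^*$ equals the unconstrained minimizer $\eta_2\mbf{A}^*_{ij}-\eta_3/\eta_1\le\eta_2-\eta_3/\eta_1=(\eta_1\eta_2-\eta_3)/\eta_1$. Finally, because $\min\big((\mbf{C}_p^*)_{ij},(\mbf{C}_q^*)_{ij}\big)>0$, we get $\max\big((\mbf{C}_p^*)_{ij},(\mbf{C}_q^*)_{ij}\big)=t_{ij}^*-\min\big((\mbf{C}_p^*)_{ij},(\mbf{C}_q^*)_{ij}\big)<t_{ij}^*\le(\eta_1\eta_2-\eta_3)/\eta_1$, which is exactly \eqref{eq:overlap_support}.

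I do not expect a serious obstacle: once the decoupling observation is in place, both parts reduce to a one‑variable convex minimization plus bookkeeping. The two points requiring a little care are (i) checking that the single‑entry re‑splitting perturbation stays feasible (immediate, as argued above) and, more importantly, (ii) invoking the bound $\mbf{A}^*_{ij}\le1$ from double stochasticity, which is precisely what replaces $\eta_2\mbf{A}^*_{ij}$ by $\eta_2$ and yields the stated uniform threshold.
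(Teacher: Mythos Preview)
Your proof is correct and follows essentially the same approach as the paper: fix $\mbf{A}^*$ and the difference $\mbf{C}_p^*-\mbf{C}_q^*$, observe that the remaining terms decouple entrywise in the sum $\mbf{C}_p+\mbf{C}_q$, and use $\mbf{A}^*_{ij}\le 1$ from double stochasticity to obtain the threshold. The paper argues via a specific perturbation (replacing $(a,b)$ by $(\max(a-b,0),\max(b-a,0))$) and computes the resulting change in objective, whereas you cast it as the scalar problem $\min_{t\ge|s|}\,\tfrac{\eta_1}{2}(t-\eta_2\mbf{A}^*_{ij})^2+\eta_3 t$; this is the same idea, and your formulation makes the case split (constraint active vs.\ inactive) and the resulting bounds slightly more transparent.
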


\subsection{Sequential Approximation: A-DSSC}

To see an alternative interpretation
of the model in \eqref{eq:absC}, note the following by expanding the second term of \eqref{eq:absC}:
\begin{equation}\label{eq:expand}
\begin{aligned}
	\eqref{eq:absC} = & \min_{\mbf{C}, \mbf{A}} \; \frac{1}{2}\norm{\mbf{X} - \mbf{X}\mbf{C}}_F^2 + \frac{\eta_1}{2}\norm{\mbf{C}}_F^2 + \eta_3 \norm{\mbf{C}}_1\\ 
	& \quad - \langle \eta_1 \abs{\mbf{C}}, \eta_2 \mbf{A} \rangle + \frac{\eta_1\eta_2^2}{2}\norm{\mbf{A}}_F^2 \\
	& \mrm{s.t.} \ \mbf{A} \in \boldsymbol\Omega_n, \ \mrm{diag}(\mbf{C}) = \mbf{0}
\end{aligned}
\end{equation}
From this form, one can see that for an uninformative initialization of the affinity as $\mbf{A}=\mbf{I}$, the minimization w.r.t. $\mbf{C}$ takes the form of Elastic Net Subspace Clustering \cite{you2016oracle}, 
\begin{equation}\label{eq:minC}
	\begin{aligned}
	& \min_{\mbf{C}} \frac{1}{2}\norm{\mbf{X} - \mbf{X}\mbf{C}}_F^2 + \frac{\eta_1}{2}\norm{\mbf{C}}_F^2 + \eta_3 \norm{\mbf{C}}_1\\
	& \ \mrm{s.t.} \  \mrm{diag}(\mbf{C}) = \mbf{0}
\end{aligned}
\end{equation}
Likewise, for a fixed $\mbf{C}$, one can observe that the above problem w.r.t. $\mbf{A}$ is a special case of a quadratically regularized optimal transport problem
\cite{blondel2018smooth,lorenz2019quadratically}:
\begin{equation}\label{eq:jminA}
	\min_{\mbf{A}} \  \langle - \abs{\mbf{C}}, \mbf{A} \rangle + \frac{\eta_2}{2}\norm{\mbf{A}}_F^2 \ \mrm{s.t.} \ \mbf{A} \in \boldsymbol\Omega_n.
\end{equation}
Thus, $\eta_2$ controls the regularization on $\mbf{A}$, with lower $\eta_2$ encouraging sparser $\mbf{A}$ and higher $\eta_2$ encouraging denser and more uniform $\mbf{A}$.  In particular, as $\eta_2 \to 0$, \eqref{eq:jminA} approaches a linear assignment problem, which has permutation matrix solutions (i.e., maximally sparse solutions) \cite{peyre2019computational, horn2012matrix}.  In contrast, as $\eta_2 \to \infty$, the optimal solution is densely connected and approaches the uniform matrix $\frac{1}{n}\mbf{1}\mbf{1}^\top$.

Hence, we consider an alternating minimization process to obtain approximate solutions for $\mbf{C}$ and $\mbf{A}$, where we first initialize $\A = \mbf{I}$ and solve \eqref{eq:minC} for $\C$.  Then, holding $\C$ fixed we solve \eqref{eq:jminA} for $\A$.
Taking the solution to this problem as the final affinity $\mbf{A}$, we obtain our one-step approximation to J-DSSC,
which we refer to as Approximate Doubly Stochastic Subspace Clustering (A-DSSC)\footnote{Note that additional alternating minimization steps can also be used as \eqref{eq:expand} is convex w.r.t. $\C$ if $\A$ is held fixed (for any feasible $\A$) and vice-versa.}. For this model, we develop a fast algorithm in Section~\ref{sec:optimization}, show state-of-the-art clustering performance in Section~\ref{sec:experiments}, and show empirically in the appendix that A-DSSC well approximates the optimization problem of J-DSSC.

Besides being an approximation to the joint model \eqref{eq:model}, A-DSSC also has an interpretation as a postprocessing method for certain subspace clustering methods that can be expressed as in \eqref{eq:minC}, such as SSC, EnSC, and LSR \cite{elhamifar2013sparse, you2016oracle, Lu12}. Instead of arbitrarily making choices about how to postprocess $\mbf{A}$ and form the normalized Laplacian, we instead take $\mbf{A}$ to be doubly stochastic; as discussed above, this provides a principled means of generating an affinity $\A$ that is suitably normalized for spectral clustering 
from a well-motivated convex optimization problem \eqref{eq:jminA}.

\section{Optimization and Implementation}\label{sec:optimization}

Here we outline the optimization procedures to compute J-DSSC and A-DSSC. The basic algorithm for subspace clustering with our models is in Algorithm~\ref{alg:dssc_outline}. Further algorithmic and implementation details are in the appendix.

\subsection{Joint Solution for J-DSSC}\label{sec:joint_opt}

In order to efficiently solve \eqref{eq:model}, we develop an algorithm in the style of linearized ADMM \cite{ma2016alternating}. We reparameterize the problem by introducing additional variables, and iteratively take minimization steps over an augmented Lagrangian as well as dual ascent steps on dual variables. As this procedure is rather standard, we delegate the full description of it to the appendix.

\begin{algorithm}[t]
	\caption{Doubly Stochastic Subspace Clustering.}
	\label{alg:dssc_outline}
	\begin{algorithmic}
		\renewcommand{\algorithmicrequire}{\textbf{Input:}}
		\REQUIRE Data matrix $\mbf{X}$, parameters $\eta_1, \eta_2, \eta_3$
		\STATE Compute $\mbf{A}$ by J-DSSC \eqref{eq:model} or A-DSSC \eqref{eq:minC}, \eqref{eq:jminA}.
		\STATE Apply spectral clustering on Laplacian $\mbf{I} - \frac{1}{2}(\mbf{A}+\mbf{A}^\top)$.
		\ENSURE Clustering result.
	\end{algorithmic}
\end{algorithm}

\begin{algorithm}[t]
	\caption{Scalable A-DSSC Active-set method.}
	\label{alg:adssc_active}
	\begin{algorithmic}
		\renewcommand{\algorithmicrequire}{\textbf{Input:}}
		\REQUIRE Data matrix $\mbf{X}$, parameters $\eta_1, \eta_2, \eta_3$
		\STATE Compute $\mbf{C}$ in \eqref{eq:minC} by an EnSC or LSR solver.
		\STATE Initialize $\mbf{S}$ (see appendix).
		\STATE Initialize $\boldsymbol{\alpha}_{\circ}$ and $\boldsymbol{\beta}_{\circ} \in \RR^n$.
		\STATE Compute $\mbf{A}_\circ = \frac{1}{\eta_2}\left[\abs{\mbf{C}} - \boldsymbol{\alpha}_{\circ}\mbf{1}^\top - \mbf{1} \boldsymbol{\beta}_{\circ}^\top \right]_+$.
		\WHILE {$\mbf{A}_\circ$ is not doubly stochastic}
		\STATE Compute $\boldsymbol{\alpha}_{\circ}$ and $\boldsymbol{\beta}_{\circ}$ in \eqref{eq:support_dual} by L-BFGS.
		\STATE Compute $\mbf{A}_\circ = ${\small $\frac{1}{\eta_2}\left[\abs{\mbf{C}} - \boldsymbol{\alpha}_{\circ}\mbf{1}^\top - \mbf{1} \boldsymbol{\beta}_{\circ}^\top \right]_+$}.
		\STATE $\mrm{supp}(\mbf{S}) \gets \mrm{supp}(\mbf{A}_\circ) \cup \mrm{supp}(\mbf{S})$.
		\ENDWHILE
		\ENSURE Optimal $\mbf{A} = \mbf{A}_\circ$.
	\end{algorithmic}
\end{algorithm}

\subsection{Approximation Solution for A-DSSC}\label{sec:adssc_compute}

\myparagraph{Solving for C.}
As previously suggested, the approximate model A-DSSC can be more efficiently solved than J-DSSC. First, the minimization over $\mbf{C}$ in \eqref{eq:minC} can be solved by certain scalable subspace clustering algorithms. In general, it is equivalent to EnSC, for which scalable algorithms have been developed \cite{you2016oracle}.
Likewise, in the special case of $\eta_1 > 0, \eta_3 = 0$, it is equivalent to LSR \cite{Lu12}, which can be computed by a single $n \times n$ linear system solve.

\myparagraph{Solving for A.}
Prior work has focused on computing the doubly stochastic projection $\eqref{eq:jminA}$ through the primal \cite{zass2007doubly, rontsis2020optimal}, but we will instead solve it through the dual. The dual gives an unconstrained optimization problem over two vectors $\boldsymbol{\alpha}, \boldsymbol{\beta} \in \RR^n$, which is easier to solve as it eliminates the coupled constraints in the primal. We give computational evidence in the appendix that the dual allows for signficantly faster computation, with over an order of magnitude improvement in certain settings. Moreover, the dual allows us to develop a highly scalable active-set method in Section~\ref{sec:active_set}, which also admits a simple GPU implementation --- thus allowing for even further speed-up over existing methods. Now, the dual takes the form:
\begin{equation}\label{eq:dual}
	\max_{\boldsymbol{\alpha}, \boldsymbol{\beta}} \  -\mbf{1}^\top(\boldsymbol{\alpha} + \boldsymbol{\beta}) - \frac{1}{2\eta_2}\norm{\left[\abs{\mbf{C}} - \boldsymbol{\alpha} \mbf{1}^\top - \mbf{1} \boldsymbol{\beta}^\top \right]_+ }_F^2
\end{equation}
in which $[ \cdot ]_+$ denotes half-wave rectification, meaning that ${[\mbf{x}]_+ = \max \{\mbf{0},\mbf{x}\}}$, applied entry-wise. The optimal matrix $\mbf{A}$ is then recovered as:
\begin{equation}\label{eq:recoverA}
	\mbf{A} = \frac{1}{\eta_2}\left[\abs{\mbf{C}} - \boldsymbol{\alpha}\mbf{1}^\top - \mbf{1} \boldsymbol{\beta}^\top \right]_+.
\end{equation}
We use L-BFGS \cite{liu1989limited} to solve for $\boldsymbol{\alpha}$ and $\boldsymbol{\beta}$ in \eqref{eq:dual}.

\subsection{Scalable Active-Set Method for A-DSSC}\label{sec:active_set}

While A-DSSC can be computed more efficiently than J-DSSC, the doubly stochastic projection step still requires $\mc O(n^2)$ operations to evaluate the objective \eqref{eq:dual} and its subgradient. This limits its direct applicability to larger datasets with high $n$. We take advantage of the sparsity of the optimal $\mbf{A}$ in the parameter regimes that we choose, and develop a significantly more efficient active-set method for computing the doubly stochastic projection in $\eqref{eq:jminA}$.

Let $\mbf{S} \in \{0,1\}^{n \times n}$ be a binary support matrix and let $\mbf{S}^c$ be its complement. The basic problem that we consider is
{
\begin{equation}\label{eq:support_primal}
	\begin{aligned}
   	 \min_{\mbf{A}} \; \langle -|\mbf{C}|, \mbf{A} \rangle + \frac{\eta_2}{2}\norm{\mbf{A}}_F^2 \\
	 \mrm{s.t.} \ \mbf{A} \in \boldsymbol{\Omega}_n, \; \mbf{A} \odot \mbf{S}^c = \mbf{0}.
	\end{aligned}
\end{equation}
}
Again, we use the dual to develop an efficient solver:
\begin{equation}\label{eq:support_dual}
	\begin{aligned}
	& \max_{\boldsymbol{\alpha}_\circ, \boldsymbol{\beta}_\circ \in \RR^n} \; -(\boldsymbol{\alpha}_\circ+\boldsymbol{\beta}_\circ)^\top \mbf{1} \\
	& \qquad - \frac{1}{2\eta_2}\norm{\left[|\mbf{C}| - \boldsymbol{\alpha}_\circ \mbf{1}^\top - \mbf{1}\boldsymbol{\beta}_\circ^\top \right]_+ \odot \mbf{S}}_F^2.
	\end{aligned}
\end{equation}
Here, the dual objective and its subgradient can be evaluated in $\mc O(|\mbf{S}|)$ time, where $|\mbf{S}|$ is the number of nonzeros in $\mbf{S}$. In practice, we tend to only need $|\mbf{S}|$ that is substantially smaller than $n^2$, so this results in major efficiency increases. Also, we need only compute the elements in $\mbf{C}$ that are in the support $\mbf{S}$; these elements can be precomputed and stored at a low memory cost for faster L-BFGS iterations. In particular, this allows using LSR (i.e. setting $\eta_3 = 0$) in large datasets; even though the LSR $\mbf{C}$ is dense and takes $\mc O(n^2)$ memory, we need only compute $\mbf{C}\odot \mbf{S}$, which can be done efficiently with closed form matrix computations and takes $\mc O(|\mbf{S}|)$ memory (see appendix).
As stated above, the objective and subgradient computations can be easily implemented on GPU --- allowing even further speed-ups.

For a dual solution $(\boldsymbol{\alpha}_{\circ}, \boldsymbol{\beta}_{\circ})$ of \eqref{eq:support_dual}, the primal solution is ${\frac{1}{\eta_2}[|\mbf{C}|- \boldsymbol{\alpha}_{\circ} \mbf 1^\top - \mbf 1 \boldsymbol{\beta}_{\circ}^\top]_+ \odot \mbf{S}}$. We show in the appendix that if the affinity with unrestricted support $\mbf{A}_{\circ} = \frac{1}{\eta_2}[|\mbf{C}|- \boldsymbol{\alpha}_{\circ} \mbf 1^\top - \mbf 1 \boldsymbol{\beta}_{\circ}^\top]_+ $ is doubly stochastic, then $\mbf{A}_{\circ}$ is the primal optimal solution to \eqref{eq:jminA}. Thus, if we iteratively update an initial support $\mbf{S}$, then the row and column sums of $\mbf{A}_\circ$ provide stopping criteria that indicate optimality.
On the other hand, if $\mbf{A}_{\circ}$ is not doubly stochastic, then the support of the true minimizer $\mbf{A}$ of the unrestricted problem \eqref{eq:jminA} is not contained in $\mrm{supp}(\mbf{S})$, so $\mrm{supp}(\mbf{S})$ is too small. In this case, we update the support by $\mrm{supp}(\mbf{S}) \gets \mrm{supp}(\mbf{A}_{\circ}) \cup \mrm{supp}(\mbf{S})$. 

Thus, we have an algorithm that initializes a support and then iteratively solves problems of restricted support \eqref{eq:support_dual} until the unrestricted affinity $\mbf{A}_{\circ}$ is doubly stochastic. This is formalized in Algorithm~\ref{alg:adssc_active}. The appendix discusses choices of initial support. We can prove the following correctness result:
\newcommand{\proptwotext}{Algorithm~\ref{alg:adssc_active} computes an optimal $\mbf{A}$ for \eqref{eq:jminA} in a finite number of steps.}
\begin{proposition}\label{prop:converge}
	\proptwotext
\end{proposition}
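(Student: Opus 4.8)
The plan is to establish two facts: (i) if Algorithm~\ref{alg:adssc_active} terminates, its output solves \eqref{eq:jminA}; and (ii) the \textbf{while} loop executes only finitely often. Fact (i) is essentially free from the discussion preceding the proposition: the loop exits only once the unrestricted iterate $\mbf{A}_\circ = \tfrac{1}{\eta_2}[\,\abs{\mbf{C}} - \boldsymbol{\alpha}_\circ\mbf{1}^\top - \mbf{1}\boldsymbol{\beta}_\circ^\top\,]_+$ is doubly stochastic, and in that case the appendix result cited there shows $\mbf{A}_\circ$ is primal optimal for \eqref{eq:jminA}. So the real content is (ii), which I would obtain by proving that $\mrm{supp}(\mbf{S})$ strictly enlarges on every pass of the loop that does not terminate; since $\mrm{supp}(\mbf{S})$ is always a subset of the finite index set $\{1,\dots,n\}^2$, a strictly increasing chain of supports has length at most $n^2$, so at most $n^2$ passes occur.

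For the strict-enlargement step, consider a pass with current support $\mbf{S}$ on which the exit test fails, i.e.\ the recomputed $\mbf{A}_\circ$ is not doubly stochastic, and suppose toward a contradiction that $\mrm{supp}(\mbf{A}_\circ)\subseteq\mrm{supp}(\mbf{S})$. Since $\mbf{S}$ is $\{0,1\}$-valued this forces $\mbf{A}_\circ\odot\mbf{S} = \mbf{A}_\circ$. But $\mbf{A}_\circ\odot\mbf{S}$ is precisely the primal-optimal point of the support-restricted problem \eqref{eq:support_primal} recovered from the dual optimum $(\boldsymbol{\alpha}_\circ,\boldsymbol{\beta}_\circ)$ of \eqref{eq:support_dual}; being primal-feasible for \eqref{eq:support_primal}, it lies in $\boldsymbol{\Omega}_n$. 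Hence $\mbf{A}_\circ = \mbf{A}_\circ\odot\mbf{S}\in\boldsymbol{\Omega}_n$ is doubly stochastic, contradicting the failed exit test. Therefore $\mrm{supp}(\mbf{A}_\circ)\not\subseteq\mrm{supp}(\mbf{S})$, so the update $\mrm{supp}(\mbf{S})\gets\mrm{supp}(\mbf{A}_\circ)\cup\mrm{supp}(\mbf{S})$ adds at least one new index and $\abs{\mbf{S}}$ strictly increases. Combining (i) and (ii) yields the proposition.

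A few supporting points need care. The identification of $\mbf{A}_\circ\odot\mbf{S}$ as the primal optimum of \eqref{eq:support_primal}, and even the existence of that optimum, rely on strong duality for \eqref{eq:support_primal}, which requires the restricted feasible set $\{\mbf{A}\in\boldsymbol{\Omega}_n : \mbf{A}\odot\mbf{S}^c=\mbf{0}\}$ to be nonempty; by Birkhoff--von Neumann this is equivalent to the bipartite graph on $\mrm{supp}(\mbf{S})$ admitting a perfect matching. I would note that the appendix's initialization $\mbf{S}^{(0)}$ is chosen to have this property and that enlarging the support preserves it, so \eqref{eq:support_primal} is a feasible, strongly convex QP --- with a unique minimizer and zero duality gap --- on every pass, and (treating the inner L-BFGS solve as exact) the recovered $\mbf{A}_\circ\odot\mbf{S}$ is indeed that minimizer. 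One should also observe the argument is unaffected by the residual nonuniqueness of $(\boldsymbol{\alpha}_\circ,\boldsymbol{\beta}_\circ)$ (translations $(\boldsymbol{\alpha}_\circ,\boldsymbol{\beta}_\circ)\mapsto(\boldsymbol{\alpha}_\circ+c\mbf{1},\boldsymbol{\beta}_\circ-c\mbf{1})$ on each connected component of $\mrm{supp}(\mbf{S})$): whichever dual optimum is returned, $\mbf{A}_\circ$ is either doubly stochastic --- hence optimal for \eqref{eq:jminA} by the cited appendix lemma, which uses only the stationarity form of $\mbf{A}_\circ$ together with feasibility --- or it is not, in which case the support grows. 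The main obstacle is thus bookkeeping rather than analysis: pinning down the primal-recovery identity for the restricted problem precisely enough to force $\mrm{supp}(\mbf{A}_\circ)\not\subseteq\mrm{supp}(\mbf{S})$.
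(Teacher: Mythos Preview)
Your proposal is correct and follows essentially the same approach as the paper: correctness of the stopping criterion via the appendix lemma, and finite termination because $\mbf{A}_\circ\odot\mbf{S}$ is doubly stochastic (as the primal optimum of the feasible restricted problem), so a non-doubly-stochastic $\mbf{A}_\circ$ forces $\mrm{supp}(\mbf{A}_\circ)\not\subseteq\mrm{supp}(\mbf{S})$ and the support strictly grows. Your additional remarks on feasibility of the initialization, strong duality, and dual nonuniqueness are more careful than the paper's treatment but do not change the argument.
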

A proof is given in the appendix. In practice, we find that only one or two updates of the support are necessary to find the optimal $\mbf{A}$, and the algorithm runs very quickly, solving problem \eqref{eq:jminA} several orders of magnitude faster than previously proposed algorithms \cite{zass2007doubly, rontsis2020optimal}.

\section{Experiments}\label{sec:experiments}

In this section, we empirically study our J-DSSC and A-DSSC models. We show that they achieve state-of-the-art subspace clustering performance on a variety of real datasets with multiple performance metrics. In the appendix, we provide more experiments that demonstrate additional strengths and interesting properties of our models.

\subsection{Experimental Setup}\label{sec:setup}

\myparagraph{Algorithms.} We compare against several state-of-the-art subspace clustering algorithms, which are all affinity-based: SSC \cite{elhamifar2009sparse, elhamifar2013sparse}, EnSC \cite{you2016oracle}, LSR \cite{Lu12}, LRSC \cite{vidal2014low}, TSC \cite{heckel2015robust}, SSC-OMP \cite{you2016scalable}, and S\supscript{3}COMP \cite{chen2020stochastic}. For these methods, we run the experiments on our own common framework, and we note that the results we obtain are similar to those that have been reported previously in the literature on the same datasets for these methods. We report partial results from the paper of S\supscript{3}COMP \cite{chen2020stochastic}, which is included as it is a recent well-performing model. Besides TSC, which uses an inner product similarity, these methods all compute an affinity based on some self-expressive loss. 

Although methods for clustering data in a union of subspaces are not directly comparable to subspace clustering neural networks \cite{haeffele2021critique}, which cluster data supported in a union of non-linear manifolds, we still include some comparisons.  In particular, we run experiments with DSC-Net \cite{ji2017deep} as a representative neural network based method and include further comparisons (that are qualitatively similar) to other neural networks \cite{zhou2018deep, zhang2018scalable, zhang2019self, abavisani2020deep} in the appendix. However, we note that recent work \cite{haeffele2021critique} has shown that DSC-Net (and many related works for network-based subspace clustering) are often fundamentally ill-posed, calling into question the validity of the results from these methods.

\myparagraph{Metrics.} 
As is standard in evaluations of clustering, we use clustering accuracy (ACC) \cite{you2016scalable} and normalized mutual information (NMI) \cite{kvalseth1987entropy} metrics, where we take the denominator in NMI to be the arithmetic average of the entropies. Also, we consider a subspace-preserving error (SPE), 
which is given by $\frac{1}{n}\sum_{i=1}^n \sum_{j \not\in \mc S_{y_i}} \abs{\mbf{A}_{ij}}/ \norm{\mbf{A}_i}_1$, where $\mc S_{y_i}$ denotes the (subspace) cluster that point $\mbf{x}_i$ belongs to, and $\mbf{A}_i$ is the ith column of $\mbf{A}$. This measures the proportion of mass in the affinity that is erroneously given to points in different (subspace) clusters, and is often used in evaluation of subspace clustering algorithms \cite{you2016scalable,soltanolkotabi2012geometric}. We report the sparsity of learned affinities by the average number of nonzeros per column (NNZ). Although sparse affinities are generally preferred, the sparsest affinity is not necessarily the best.

\myparagraph{Datasets.} We test subspace clustering performance on the Extended Yale-B dataset \cite{georghiades2001few}, Columbia Object Image Library (COIL-40 and COIL-100) \cite{nenecolumbia}, UMIST face dataset \cite{graham1998characterising}, ORL face dataset \cite{samaria1994parameterisation}, MNIST \cite{lecun1998gradient}, and EMNIST-Letters \cite{cohen2017emnist}. 
We run experiments with the raw pixel data from the images of certain datasets; for COIL, UMIST, MNIST, and EMNIST we run separate experiments with features obtained from a scattering convolution network \cite{bruna2013invariant} reduced to dimension 500 by PCA \cite{wold1987principal}. This has been used in previous subspace clustering works \cite{you2016oracle, chen2020stochastic} to better embed the raw pixel data in a union of linear subspaces.

We do not evaluate DSC-Net \cite{ji2017deep} on scattered features as the network architecture is only compatible with image data. For MNIST ($n$=70,000) and EMNIST ($n$=145{,}600), we can only run scalable methods, as neural networks and other methods that form dense $n$-by-$n$ matrices have unmanageable memory and runtime costs.

\myparagraph{Clustering setup.} For each baseline method we learn the $\mbf{C}$ from that method,  form the affinity as ${\mbf{\hat A} = (\abs{\mbf{C}} + \abs{\mbf{C}}^\top)/2}$, take the normalized Laplacian ${\mbf{L} = \mbf{I} - \mbf{D}^{-1/2} \mbf{\hat A} \mbf{D}^{-1/2}}$ (where ${\mbf{D} = \mrm{diag}(\mbf{\hat A 1}})$), and then apply spectral clustering with the specified number of clusters $k$ \cite{von2007tutorial}. For our DSSC methods, we form $\mbf{\hat A} = {(\mbf{A} + \mbf{A}^\top)/2}$ and use $\mbf{L} = \mbf{I} - \hat \A$ directly for spectral clustering (recall our affinity is invariant to Laplacian normalization).  
For the spectral clustering we take the $k$ eigenvectors corresponding to the $k$ smallest eigenvalues of $\mbf{L}$ as an embedding of the data (we take $k+1$ eigenvectors for MNIST as in \cite{chen2020stochastic}, as well as for EMNIST for similar reasons).
After normalizing these embeddings to have unit $l_2$ norm, we obtain clusterings by random initializations of $k$-means clustering, compute the accuracy and NMI for each clustering, then report the average accuracy and NMI as the final result. We emphasize that for each method we obtain a nonnegative, symmetric affinity by a shared postprocessing of $\C$ and/or $\mbf{A}$ instead of using different ad-hoc postprocessing methods that may confound the comparisons. Since DSC-Net is highly reliant on its ad-hoc postprocessing \cite{haeffele2021critique}, we also report results with its postprocessing strategy applied (shown as DSC-Net-PP).

\myparagraph{Parameter choices.} As is often done in subspace clustering evaluation,
we choose hyperparameters for each method by searching over some set of parameters and
reporting the results that give the highest clustering accuracy. 
For DSC-Net, we use the suggested parameters in \cite{ji2017deep} where applicable,
and search over hyperparameters for novel datasets.
We give details on model hyperparameters in the appendix.

\setlength\dashlinegap{3pt}
\setlength\arrayrulewidth{.5pt}
\setlength\dashlinedash{4pt}

\begin{table*}[t]
	\centering
	\caption{Subspace clustering results on different datasets. `---' indicates that we do not have results, and `N/A' indicates that the method cannot be run for the given dataset. Best results with respect to ACC, NMI, and SPE are \bf{bolded}.}
	{\footnotesize
	\begin{tabular}{cccccccccccccc}
		\toprule
				& & \multicolumn{7}{c}{Shallow Affinity-Based} & \multicolumn{2}{c}{Neural}  & \multicolumn{2}{c}{Doubly Stochastic} \\
				\cmidrule(lr){3-9} \cmidrule(lr){10-11} \cmidrule(lr){12-13}
		Dataset & Metric & SSC  & EnSC & LSR & LRSC & TSC & SSC-OMP & S\supscript{3}COMP & DSC-Net & DSC-Net-PP  & J-DSSC &  A-DSSC \\
	\midrule
		\multirow{4}{*}{Yale-B}	& ACC & .654 & .652 & .659 & .662 & .514 & .780 & .874 & .691  & \bf{.971}  &  .924  & .917 \\
								& NMI & .734 & .734 & .743 & .739 & .629 & .844 & --- & .746  & \bf{.961} &  .952  & .947 \\
								& SPE & .217 & .218 & .869 & .875 & .217 & .179 & .203 & .881  & \bf .038 &  .080  & .080 \\
							& NNZ & 22.9 & 23.7 & 2413 & 2414 & 4.3 & 9.1 & --- & 2414  & 22.1  & 14.5  & 14.4 \\
	\midrule
		\multirow{4}{*}{COIL-40} & ACC & .799 & .801 & .577 & .567 & .813 & .411 & --- & .543  & .751 & .899  &  \bf{.922} \\
								 & NMI & .940 & .930 & .761 & .736 & .916 & .605 & --- & .743  & .887 & .963  & \bf{.967} \\
								  & SPE & .013 & .017 & .926 & .877 & .057 & .025 & --- & .873 & .105 & .012  & \bf .008 \\
								  & NNZ & 3.1 & 6.2 & 2879 & 2880 &  4.5 & 2.9 & --- & 2880 & 13.1 & 8.9  & 5.2 \\[2pt]
								  \hdashline
								  & ACC & .996 & .993 & .734 & .753 & .941 & .489 & --- & N/A & N/A &  \bf{1.00} & \bf{1.00}  \\
COIL-40		& NMI & .998 & .995 & .868 & .871 & .981 & .711 & --- & N/A & N/A & \bf{1.00} & \bf{1.00} \\
(Scattered)	& SPE & .0002 & \bf 0.00 & .787 & .816 & .004 & .056 & --- & N/A & N/A &  .00006 & .00003 \\
			& NNZ & 2.2 & 4.0 & 2879 & 2880  & 4.3 & 9.1 & --- & N/A & N/A &  13.8 & 8.9 \\
	\midrule
		\multirow{4}{*}{COIL-100} & ACC & .704 & .680 & .492 & .476 & .723 & .313 & .789 & .493 & .635 & .796 & \bf .824 \\
								  & NMI & .919 & .901 & .753 & .733 & .904 & .588 & --- & .752  & .875 & .943  & \bf .946 \\
								  & SPE & .052 & .044 & .945 & .946 & .057 & .052 & \bf .032 & .958 & .384 & .049  & .037 \\
								  & NNZ & 3.1 & 6.8 & 7199 & 7200 & 3.6 & 3.0 & --- & 7200 & 44.7 & 9.9  & 5.8 \\[3pt]
								  \hdashline
								  & ACC & .954 & .967 & .642 & .654 & .915 & .397 & --- & N/A & N/A  & .961 & \bf{.984}  \\
		COIL-100	  & NMI & .991 & .990 & .846 & .850 & .975 & .671 & --- & N/A & N/A  & .992 & \bf{.997}  \\
		(Scattered)	  & SPE & .002 & .004 & .891 & .905 & .009 & .055 & --- & N/A & N/A  & .004 & \bf .001 \\
					  & NNZ & 2.3  & 4.2 & 7199 & 7200 & 4.3  & 9.1 & --- & N/A & N/A &  10.4 & 6.6 \\
	\midrule
		\multirow{4}{*}{UMIST} & ACC & .537 & .562 & .462 & .494 & .661 & .509 & --- & .456 & .708 & \bf .732  & .725 \\
							   & NMI & .718 & .751 & .645 & .662 & .829 & .680 & --- & .611 & .848 & \bf .858  & .851 \\
							   & SPE & .079 & .085 & .811 & .824 & .035 & .091 & --- & .834 & .393 & .036  & \bf .034 \\
							   & NNZ & 4.9 & 6.6 & 574 & 575 & 3.7 & 2.9 & --- & 575 & 23.2 & 5.1 & 4.8 \\[3pt]
							   \hdashline
					   & ACC & .704 & .806 & .524 & .531 & .714 & .401 & --- & N/A & N/A  & .873 & \bf{.888} \\
		UMIST		   & NMI & .834 & .903 & .701 & .711 & .855 & .510 & --- & N/A & N/A &  \bf{.939} & .935  \\
		(Scattered)	   & SPE & .038 & .029 & .758 & .804 & .030 & .294 & --- & N/A & N/A &  .020 & \bf .018 \\
					   & NNZ & 2.7 & 4.5 & 574 & 575 & 3.8 & 14.5 & --- & N/A & N/A &  7.2 & 5.4 \\
					\midrule
		\multirow{4}{*}{ORL} & ACC & .774 & .774 & .709 & .679 & .783 & .664 & --- & .758 & \bf{.845} & .785 & .790  \\
							 & NMI & .903 & .903 & .856 & .834 & .896 & .832 & --- & .878 & \bf{.915} & .906 & .910 \\
							 & SPE & .240 & .268 & .884 & .874 & .272 & .305 & --- & .885 & .421 & .176 & \bf .159 \\
							 & NNZ & 12.3 & 14.4 & 399 & 400 & 9.3 & 5.0 & --- & 400 & 20.7 & 10.8 & 9.8  \\
		\bottomrule
	\end{tabular}
}
\vspace{-1pt}
	\label{tab:results}
\end{table*}

\begin{table*}[t]
	\centering
	\caption{Large-scale subspace clustering results. `---' indicates that we do not have results for the method. Best results are \bf{bolded}.}
	{\footnotesize
	\begin{tabular}{ccccccccc}
		\toprule
				& & \multicolumn{5}{c}{Shallow Affinity-Based} &  \multicolumn{1}{c}{Doubly Stochastic} \\
				\cmidrule(lr){3-7}  \cmidrule(lr){8-8}
		Dataset & Metric & SSC  & EnSC & TSC & SSC-OMP & S\supscript{3}COMP  &  A-DSSC \\
	\midrule
				& ACC & .963 & .963 & .980 & .574 & .963 & \bf{.990} \\
		MNIST		& NMI & .915 & .915 & .946 & .624 & --- & \bf{.971}  \\
		(Scattered)	& SPE & .098 & .098 & .028 & .301 & .301 & \bf .017 \\
					& NNZ & 39.3 & 40.2 & 18.1 & 25.4 & --- & 14.7 \\
	\midrule
					& ACC & .638 & .644 & .698 & .304 & --- & \bf .744 \\
		EMNIST		& NMI & .745 & .746 & .785 & .385 & --- & \bf .832 \\
		(Scattered)	& SPE & .173 & .174 & .106 & .216 & --- & \bf .100 \\
					& NNZ & 22.3 & 24.7 & 6.3 & 3.7 & --- & 17.5 \\

	\bottomrule
	\end{tabular}
}
\vspace{-5mm}
	\label{tab:scalableresults}
\end{table*}

\subsection{Results}

Experimental results are reported in Tables \ref{tab:results} and \ref{tab:scalableresults}. On each dataset, one of our DSSC methods achieves the highest accuracy and NMI among all non-neural-network methods. In fact, on scattered data, DSSC achieves higher accuracy and NMI than the neural method DSC-Net, even when allowing DSC-Net to use its ad-hoc postprocessing. DSC-Net with postprocessing does achieve better performance on Yale-B and ORL than our methods, though without postprocessing our models outperform it; further, note that our method is considerably simpler than DSC-Net which requires training a neural network and hence requires choices of width, activations, stochastic optimizer, depth, and so on. Moreover, our models substantially outperform DSC-Net on both the raw pixel data and scattered embeddings of UMIST, COIL-40, and COIL-100 --- achieving perfect clusterings of  COIL-40 and near-perfect clusterings of COIL-100 --- establishing new states-of-the-art to the best of our knowledge. Likewise, among the scalable methods our models achieve very strong performance on (E)MNIST.

We also see that A-DSSC, which can be viewed as running SSC, EnSC or LSR followed by a principled doubly stochastic postprocessing, outperforms SSC, EnSC, and LSR across all datasets. This suggests that the doubly stochastic affinity matrices learned by our models are indeed effective as inputs into spectral clustering 
and that our method can also potentially be combined with other subspace clustering methods as a principled normalization step. Beyond clustering accuracy, our models also learn affinity matrices with other desirable properties, 
such as having the lowest SPE among all methods --- indicating that our learned affinities place mass in the correct subspaces.

\vspace{-2mm}

\section{Conclusion}

\vspace{-2mm}

In this work, we propose models based on learning doubly stochastic affinity matrices that unify the self-expressive modeling of subspace clustering with the subsequent spectral clustering. We develop efficient algorithms for computing our models, including a novel active-set method that is highly scalable and allows for clustering large computer vision datasets.
We demonstrate experimentally that our methods achieve substantial improvements in subspace clustering accuracy over state-of-the-art models, without using any of the ad-hoc postprocessing steps that many other methods rely on. Our work further suggests many promising directions for future research by better merging the self-expressive modeling of subspace clustering with spectral clustering (or other graph clustering).

\myparagraph{Acknowledgements.} We thank {Tianjiao Ding}, {Charles R. Johnson}, and {Isay Katsman} for helpful discussions.
This work was partially supported by the Northrop  Grumman Mission Systems Research in Applications for Learning Machines (REALM) initiative, the Johns Hopkins University Research Experience for Undergraduates in Computational Sensing and Medical Robotics (CSMR REU) program, NSF 1704458, and NSF 1934979.

{\small
\bibliographystyle{ieee_fullname}
\bibliography{%
biblio/refs,%
biblio/vidal,%
biblio/sparse,%
biblio/learning,%
biblio/deeplearning,%
biblio/optimization%
}

\clearpage


\begin{appendices}

\section{A-DSSC Details}

\subsection{Derivation of A-DSSC Dual}

Here, we give a derivation for the dual \eqref{eq:dual} of the minimization over $\mbf{A}$ in A-DSSC, as well as the dual with restricted support \eqref{eq:support_dual}. It suffices to derive the restricted support dual, as the full dual follows by setting $\mbf{S}$ as the all ones matrix.
Now, recall that the primal problem is: 
\begin{multline}
	\argmin_{\mbf{A}} \  \langle - \eta_1\abs{\mbf{C}},  \eta_2\mbf{A} \rangle + \frac{\eta_1\eta_2^2}{2}\norm{\mbf{A}}_F^2 \\
	\mrm{s.t.} \ \mbf{A} \in \boldsymbol\Omega_n, \ \mbf{A}\odot\mbf{S}^c = \mbf{0}
\end{multline}
\begin{multline}
	= \argmin_{\mbf{A}} \  \langle - \abs{\mbf{C}},  \mbf{A}\odot\mbf{S} \rangle + \frac{\eta_2}{2}\norm{\mbf{A}\odot\mbf{S}}_F^2 \\
	  \mrm{s.t.} \ \mbf{A}\odot\mbf{S} \in \boldsymbol\Omega_n.
\end{multline}

Introducing Lagrange multipliers $\boldsymbol{\alpha}$ and $\boldsymbol{\beta} \in \RR^n$ for the row and column sum constraints, we have the equivalent problem:
\begin{multline}
	\min_{\mbf{A} \geq \mbf{0}} \max_{\boldsymbol{\alpha}, \boldsymbol{\beta} \in \RR^n} \langle - \abs{\mbf{C}},  \mbf{A}\odot\mbf{S} \rangle + \frac{\eta_2}{2}\norm{\mbf{A}\odot\mbf{S}}_F^2 \\
	+ \langle \boldsymbol{\alpha}, (\mbf{A}\odot\mbf{S})\mbf{1} -\mbf{1} \rangle + \langle \boldsymbol{\beta}, (\mbf{A}\odot\mbf{S})^\top\mbf{1} -\mbf{1} \rangle
\end{multline}
As the primal problem is convex and has a strictly feasible point, strong duality holds by Slater's condition, so this is equivalent to:
\begin{multline}
	\max_{\boldsymbol{\alpha}, \boldsymbol{\beta} \in \RR^n} \min_{\mbf{A} \geq \mbf{0}}  \langle - \abs{\mbf{C}},  \mbf{A}\odot\mbf{S} \rangle + \frac{\eta_2}{2}\norm{\mbf{A}\odot\mbf{S}}_F^2 \\
	+ \langle \boldsymbol{\alpha}, (\mbf{A}\odot\mbf{S})\mbf{1} -\mbf{1} \rangle + \langle \boldsymbol{\beta}, (\mbf{A}\odot\mbf{S})^\top\mbf{1} -\mbf{1} \rangle
\end{multline}
\begin{multline}
	= \max_{\boldsymbol{\alpha}, \boldsymbol{\beta} \in \RR^n} - (\boldsymbol{\alpha} + \boldsymbol{\beta})^\top \mbf{1} +  \min_{\mbf{A} \geq \mbf{0}}  \langle - \abs{\mbf{C}},  \mbf{A}\odot\mbf{S} \rangle + \frac{\eta_2}{2}\norm{\mbf{A}\odot\mbf{S}}_F^2 \\
	+ \langle \boldsymbol{\alpha}\mbf{1}^\top + \mbf{1}\boldsymbol{\beta}^\top, \mbf{A}\odot\mbf{S} \rangle 
\end{multline}
Letting $\mbf{K} = \abs{\mbf{C}} - \boldsymbol{\alpha}\mbf{1}^\top - \mbf{1}\boldsymbol{\beta}^\top$, the inner minimization takes the form:
\begin{align}
	& \min_{\mbf{A} \geq \mbf{0}} \   \langle -\mbf{K} ,  \mbf{A}\odot\mbf{S} \rangle + \frac{\eta_2}{2}\norm{\mbf{A}\odot\mbf{S}}_F^2\\
	= & \eta_2 \min_{\mbf{A} \geq \mbf{0}} \  \left \langle -\frac{1}{\eta_2}\mbf{K} ,  \mbf{A}\odot\mbf{S} \right \rangle + \frac{1}{2}\norm{\mbf{A}\odot\mbf{S}}_F^2\\
	= & -\frac{1}{2\eta_2}\norm{\mbf{K}}_F^2 +  \eta_2 \min_{\mbf{A} \geq \mbf{0}} \ \frac{1}{2}\norm{\frac{1}{\eta_2}\mbf{K} - \mbf{A}\odot\mbf{S}}_F^2 \\
	= & -\frac{1}{2\eta_2}\norm{\mbf{K}}_F^2 +  \frac{1}{2\eta_2}\norm{\mbf{K}_- \odot \mbf{S}^c}_F^2\\
	= & -\frac{1}{2\eta_2}\norm{\mbf{K}_+\odot\mbf{S}}_F^2
\end{align}
Where $[\mbf{K}]_-$ is $\min\{\mbf{K}, \mbf{0}\}$ taken elementwise.
Thus, the final version of the dual is
\begin{equation}
\begin{aligned}
	& \max_{\boldsymbol{\alpha}_\circ, \boldsymbol{\beta}_\circ \in \RR^n} \; -(\boldsymbol{\alpha}_\circ+\boldsymbol{\beta}_\circ)^\top \mbf{1} \\
	& \qquad - \frac{1}{2\eta_2}\norm{\left[|\mbf{C}| - \boldsymbol{\alpha}_\circ \mbf{1}^\top - \mbf{1}\boldsymbol{\beta}_\circ^\top \right]_+ \odot \mbf{S}}_F^2
	\end{aligned}
\end{equation}
and the optimal value of $\mbf{A}$ is as given as
\begin{equation}
	\mbf{A} = \frac{1}{\eta_2} \left[|\mbf{C}|-\boldsymbol{\alpha}_\circ \mbf{1}^\top - \mbf{1} \boldsymbol{\beta}_\circ^\top \right]_+\odot\mbf{S}
\end{equation}

\subsection{Active-Set Method}

Here, we give a proof that our active-set method in Algorithm~\ref{alg:adssc_active} converges to an optimal solution in finitely many steps. We start with a lemma that shows the correctness of our stopping criterion.

\begin{lemma}\label{lem:optimal}
	Let $\mbf{S}$ contain a doubly stochastic zero-nonzero pattern in its support, and let $(\boldsymbol{\alpha}_{\circ}, \boldsymbol{\beta}_{\circ})$ be a solution to the dual of the problem with restricted support \eqref{eq:support_dual} with parameter $\eta_2$. Then $\mbf{A}_\circ = \frac{1}{\eta_2}\left[|\mbf{C}| - \boldsymbol{\alpha}_{\circ}\mbf{1}^\top - \mbf{1}\boldsymbol{\beta}_{\circ}^\top \right]_+$ is optimal for the original problem with unrestricted support if and only if $\mbf{A}_\circ$ is doubly stochastic.
\end{lemma}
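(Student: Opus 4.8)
The plan is to prove the two directions of the equivalence separately; the forward direction is immediate, and essentially all the content sits in the reverse direction, which I would establish by exhibiting explicit KKT multipliers for the unrestricted problem \eqref{eq:jminA}.

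For the forward direction, if $\mbf{A}_\circ$ is optimal for \eqref{eq:jminA} then it is in particular feasible for \eqref{eq:jminA}, and feasibility for that problem is exactly the condition $\mbf{A}_\circ \in \boldsymbol{\Omega}_n$, i.e.\ $\mbf{A}_\circ$ is doubly stochastic; nothing further is needed.

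For the reverse direction, assume $\mbf{A}_\circ$ is doubly stochastic. I would introduce $\mbf{K}_\circ = \abs{\mbf{C}} - \boldsymbol{\alpha}_\circ\mbf{1}^\top - \mbf{1}\boldsymbol{\beta}_\circ^\top$, so that by definition $\eta_2 \mbf{A}_\circ = [\mbf{K}_\circ]_+$, and set $\mbf{Z}_\circ = [\mbf{K}_\circ]_+ - \mbf{K}_\circ = -[\mbf{K}_\circ]_- \geq \mbf{0}$. The claim is that $\mbf{A}_\circ$, together with $\boldsymbol{\alpha}_\circ,\boldsymbol{\beta}_\circ$ as multipliers for the row- and column-sum constraints and $\mbf{Z}_\circ$ as the multiplier for $\mbf{A}\geq\mbf{0}$, satisfies the KKT system of \eqref{eq:jminA}. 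I would verify the four conditions in turn: primal feasibility holds since $\mbf{A}_\circ \geq \mbf{0}$ always and $\mbf{A}_\circ$ is doubly stochastic by assumption; dual feasibility $\mbf{Z}_\circ \geq \mbf{0}$ holds by construction; stationarity $-\abs{\mbf{C}} + \eta_2\mbf{A}_\circ + \boldsymbol{\alpha}_\circ\mbf{1}^\top + \mbf{1}\boldsymbol{\beta}_\circ^\top - \mbf{Z}_\circ = \mbf{0}$ is a one-line computation using $\eta_2\mbf{A}_\circ - \mbf{Z}_\circ = [\mbf{K}_\circ]_+ + [\mbf{K}_\circ]_- = \mbf{K}_\circ$; and complementary slackness $\mbf{Z}_\circ \odot \mbf{A}_\circ = \mbf{0}$ holds entrywise because at each index either $(\mbf{K}_\circ)_{ij} \geq 0$, which forces $(\mbf{Z}_\circ)_{ij}=0$, or $(\mbf{K}_\circ)_{ij} < 0$, which forces $(\mbf{A}_\circ)_{ij}=0$. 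Since \eqref{eq:jminA} minimizes a convex (indeed strictly convex, as $\eta_2 > 0$) objective over the polyhedron $\boldsymbol{\Omega}_n$, the KKT conditions are sufficient for global optimality, so $\mbf{A}_\circ$ solves \eqref{eq:jminA}.

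A few remarks on scope and on where care is needed. The reverse argument uses only the algebraic form $\mbf{A}_\circ = \tfrac{1}{\eta_2}[\mbf{K}_\circ]_+$, not the optimality of $(\boldsymbol{\alpha}_\circ,\boldsymbol{\beta}_\circ)$ for \eqref{eq:support_dual}; the hypothesis that $\mbf{S}$ contains a doubly stochastic zero-nonzero pattern is what is needed elsewhere, namely to make the restricted primal \eqref{eq:support_primal} strictly feasible so that, via Slater's condition, \eqref{eq:support_dual} actually attains a finite maximizer and $(\boldsymbol{\alpha}_\circ,\boldsymbol{\beta}_\circ)$ is well defined. The main obstacle is purely bookkeeping: keeping signs straight in the Lagrangian when pairing the decomposition $\mbf{K}_\circ = [\mbf{K}_\circ]_+ + [\mbf{K}_\circ]_-$ against $\eta_2\mbf{A}_\circ$ and $\mbf{Z}_\circ$, and making sure the sufficiency-of-KKT step is invoked with its hypotheses (convex objective, convex/affine constraints) explicitly checked. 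An equivalent route, which I would mention but not pursue in detail, is to observe that the unrestricted dual objective \eqref{eq:dual} is concave with gradient components $-\mbf{1} + \mbf{A}(\boldsymbol{\alpha},\boldsymbol{\beta})\mbf{1}$ and $-\mbf{1} + \mbf{A}(\boldsymbol{\alpha},\boldsymbol{\beta})^\top\mbf{1}$, so $(\boldsymbol{\alpha}_\circ,\boldsymbol{\beta}_\circ)$ is a dual maximizer iff $\mbf{A}_\circ$ is doubly stochastic, and then strong duality for \eqref{eq:jminA} identifies $\mbf{A}_\circ$ as primal optimal.
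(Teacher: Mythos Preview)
Your proof is correct. The forward direction matches the paper exactly. For the reverse direction, your primary argument differs from the paper's: you verify the KKT system of the primal \eqref{eq:jminA} directly by constructing the nonnegativity multiplier $\mbf{Z}_\circ = -[\mbf{K}_\circ]_-$ and checking stationarity and complementary slackness entrywise, whereas the paper works on the dual side, computing the subgradient of the unrestricted dual \eqref{eq:dual} and observing that it vanishes exactly when $\mbf{A}_\circ$ has unit row and column sums, so that $(\boldsymbol{\alpha}_\circ,\boldsymbol{\beta}_\circ)$ is already optimal for \eqref{eq:dual} and the primal recovery formula \eqref{eq:recoverA} yields $\mbf{A}_\circ$. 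You in fact sketch this second route in your closing remark, so you are aware of both. The two arguments are equivalent in length and depth; your KKT route is slightly more self-contained (it does not appeal to the primal--dual recovery relation derived earlier in the paper), while the paper's route makes more transparent why the row/column sums of $\mbf{A}_\circ$ are the natural optimality certificate, which is the operational content used in Algorithm~\ref{alg:adssc_active}. Your observation that the reverse implication uses only the algebraic form of $\mbf{A}_\circ$, and that the feasibility hypothesis on $\mbf{S}$ is needed only to guarantee existence of $(\boldsymbol{\alpha}_\circ,\boldsymbol{\beta}_\circ)$, is a nice clarification that the paper leaves implicit.
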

\begin{proof}

    It is clear that any optimal solution is doubly stochastic, since these are exactly the feasibility conditions.

	For the other direction, suppose that $\mbf{A}_\circ$ is doubly stochastic. Note that a subgradient of the restricted support dual objective \eqref{eq:support_dual} is given by:
	\begin{align}
		\nabla_{\boldsymbol{\alpha}_\circ} \eqref{eq:support_dual} & = -\mbf{1} + \frac{1}{\eta_2}\left([|\mbf{C}| - \boldsymbol{\alpha}_\circ \mbf{1}^\top - \mbf{1}\boldsymbol{\beta}_\circ^\top]_+ \odot \mbf{S} \right) \mbf{1}\\
		\nabla_{\boldsymbol{\beta}_\circ} \eqref{eq:support_dual} & = -\mbf{1} + \frac{1}{\eta_2}\left([|\mbf{C}| - \boldsymbol{\alpha}_\circ \mbf{1}^\top - \mbf{1}\boldsymbol{\beta}_\circ^\top]_+ \odot \mbf{S} \right)^\top \mbf{1}
	\end{align}
	In the case of unrestricted support (i.e. full support), the subgradients exactly measure how far the row sums and column sums of $\mbf{A}_\circ$ deviate from $\mbf{1}$. Since $\mbf{A}_\circ$ is doubly stochastic, the row and column sums are $1$, so the subgradients are zero. Thus, $\boldsymbol{\alpha}_\circ$ and $\boldsymbol{\beta}_\circ$ are optimal for the unrestricted support dual \eqref{eq:dual}, which means that we may recover the optimal primal variable as $\frac{1}{\eta_2}[|\mbf{C}| - \boldsymbol{\alpha}_\circ \mbf{1}^\top - \mbf{1}\boldsymbol{\beta}^\top]_+$, which is exactly $\mbf{A}_\circ$.
\end{proof}

With this result, we can directly prove convergence and correctness.

\newtheorem*{prop2}{Proposition~\ref{prop:converge}}

\begin{prop2}
    \proptwotext
\end{prop2}
\begin{proof}
	First, note that the initialization of $\mbf{S}$ gives a feasible problem, since the support contains a permutation matrix, which is doubly stochastic. In each iteration, we compute the matrix $\mbf{A}_\circ = \frac{1}{\eta_2}\left[|\mbf{C}| - \boldsymbol{\alpha}_\circ\mbf{1}^\top - \mbf{1}\boldsymbol{\beta}_\circ^\top \right]_+$. If $\mbf{A}_\circ$ is doubly stochastic, then we terminate and $\mbf{A}_\circ$ is optimal for the unrestricted support problem by Lemma~\ref{lem:optimal}. If $\mbf{A}_\circ$ is not doubly stochastic, then $\mbf{A}_\circ \neq \mbf{A}_\circ \odot \mbf{S}$ since $\mbf{A}_\circ \odot \mbf{S}$ is doubly stochastic. Thus, the support is updated to a new support that has not been seen before, since it is strictly larger. There are only finitely many choices of support, so there are only finitely many iterations of this procedure, thus proving finite convergence of our algorithm.
\end{proof}

\subsection{Support Initialization}

The primal \eqref{eq:support_primal} may not be feasible for certain choices of support $\mbf{S}$ with too many zeros, since there is not always a doubly stochastic matrix with a given zero-nonzero pattern. One would like to have an intialization for the support that is both feasible, so that our algorithm converges, and that approximately contains the true support of the optimal $\mbf{A}$, so that the convergence is fast. One reasonable guess for the support of $\mbf{A}$ is the top $k$ entries of each row, since these entries contribute more to the inner product of the objective. Also, for the linear assignment problem, which is the limiting $\eta_2 = 0$ case, the top $k$ entries of each row have been shown to contain the support of the solution $\mbf{A}$ with high probability \cite{aldous2001zeta}. However, a top $k$ graph is not guaranteed to be feasible for reasonably sized $k$, as the following lemma shows:

\begin{lemma}
	Let $n$ and $k$ be integers with $n \geq k \geq 3$.
	\begin{enumerate}
		\item If $k < n/2$, then there exists a graph $G$ on $n$ nodes with minimum degree $k$ such that there is no doubly stochastic matrix with support contained in the support of $G$.
		\item If $k \geq n/2$, then there is a doubly stochastic matrix in the support of any graph on $n$ nodes with minimum degree $k$.
	\end{enumerate}
\end{lemma}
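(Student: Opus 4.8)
The plan is to translate the question into a bipartite matching condition and then apply Hall's theorem in both directions. For a simple graph $G$ on $[n]$ with adjacency matrix $\mbf{A}_G$, introduce the bipartite graph $B(G)$ with both sides equal to $[n]$ and an edge between left-$i$ and right-$j$ precisely when $\{i,j\}\in E(G)$. I would first establish the reduction: there is a doubly stochastic $\mbf{D}$ with $\mrm{supp}(\mbf{D})\subseteq\mrm{supp}(\mbf{A}_G)$ if and only if $B(G)$ has a perfect matching. Indeed, a perfect matching of $B(G)$ is a permutation $\sigma$ with $\{i,\sigma(i)\}\in E(G)$ for all $i$, so $\mbf{D}=\mbf{P}_\sigma$ suffices; conversely, writing $\mbf{D}=\sum_t\lambda_t\mbf{P}_{\sigma_t}$ with $\lambda_t>0$ via Birkhoff--von Neumann, each entry $\mbf{D}_{i,\sigma_t(i)}\ge\lambda_t>0$ forces $\{i,\sigma_t(i)\}\in E(G)$, so any $\sigma_t$ yields a perfect matching. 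By Hall's theorem, $B(G)$ has a perfect matching if and only if $|N_G(S)|\ge|S|$ for every $S\subseteq[n]$, where $N_G(S)=\bigcup_{i\in S}N_G(i)$; hence both parts of the lemma reduce to analyzing this Hall condition.

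For part 2, assume $k\ge n/2$ (equivalently $n\le 2k$) and suppose toward a contradiction that $|N_G(S)|<|S|$ for some nonempty $S$. Choosing any $v\in S$ gives $|N_G(S)|\ge\deg(v)\ge k$, so $|S|\ge|N_G(S)|+1\ge k+1$ and therefore $|[n]\setminus S|=n-|S|\le n-k-1\le k-1$. Since $|N_G(S)|\le|S|-1<n$, some vertex $u\notin N_G(S)$ exists; then $N_G(u)\cap S=\emptyset$, so $\deg(u)\le|[n]\setminus S|\le k-1<k$, a contradiction. Thus Hall's condition holds and the desired doubly stochastic matrix exists for every such $G$.

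For part 1, assume $k<n/2$, that is $n\ge 2k+1$, and construct a violating $G$ as follows: take disjoint sets $S,T,R$ with $|S|=k+1$, $|T|=k$, $|R|=n-2k-1$; make the bipartite graph between $S$ and $T$ complete, add no edges inside $S$ and no edges between $S$ and $R$, and make $T\cup R$ a clique. Every vertex of $S$ then has degree exactly $k$, every vertex of $T$ has degree $\ge k+1$, and (when $R\ne\emptyset$, which forces $n\ge 2k+2$) every vertex of $R$ has degree $|T\cup R|-1=n-k-2\ge k$; hence $\delta(G)=k$. But $N_G(S)=T$ with $|T|=k<k+1=|S|$, so Hall's condition fails, $B(G)$ has no perfect matching, and no doubly stochastic matrix is supported within $G$. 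The only delicate points I anticipate are getting the reduction exactly right --- noting that ``support contained in $\mrm{supp}(\mbf{A}_G)$'' corresponds precisely to a matching (not to the stronger total-support condition), and that the permutations in play are automatically fixed-point-free because $G$ has no self-loops --- together with the tightness of the degree count in part 2 at the boundary $k=n/2$; the remaining steps are routine applications of Hall's theorem.
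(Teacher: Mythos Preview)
Your proof is correct but follows a genuinely different route from the paper. The paper's argument for part~2 invokes Dirac's theorem: since every vertex has degree at least $n/2$, each connected component (necessarily on $m\le n$ vertices) has minimum degree at least $m/2$, so it contains a Hamiltonian cycle; the disjoint union of these cycles is a permutation matrix supported in $G$. You instead reduce everything to Hall's condition via the Birkhoff--von Neumann decomposition and verify Hall's inequality directly with a short counting argument. Your approach is more self-contained and arguably more elementary, since Hall's marriage theorem is lighter machinery than Dirac's theorem, and it also makes the equivalence ``doubly stochastic support $\Leftrightarrow$ permutation support'' explicit, which the paper uses only implicitly. For part~1, the paper gives a slightly simpler two-block counterexample: an independent set on the first $n-k$ vertices joined completely to a clique on the last $k$ vertices (so the violating Hall set has size $n-k$ rather than your $k+1$), avoiding the separate case analysis for $R$; but your three-block construction is equally valid and the degree checks go through exactly as you describe.
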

\begin{proof}
    Let $k < n/2$, so that $n-k > k$. Consider the graph with adjacency matrix
	\begin{equation}\label{eq:badgraph}
	    \begin{bmatrix}
			\mbf{0}_{n-k, n-k} & \mbf{1}_{n-k}\mbf{1}_k^\top\\
			\mbf{1}_k\mbf{1}_{n-k}^\top & \mbf{1}_k\mbf{1}_k^\top - \mbf{I}_{k,k}
	    \end{bmatrix}
	\end{equation}
	Each row has either $k$ or $n-1$ ones, so each node has at least degree $k$. However, there can be no permutation matrix with support contained in the support of this graph. This is because the first $n-k$ nodes only have $k$ unique neighbors, and $n-k > k$.

	Now, let $k \geq n/2$, and let $G$ be a graph on $n$ nodes with minimum degree $k \geq n/2$. In each connected component of the graph with $m$ nodes, the minimum degree is at least $n/2 \geq m/2$. Thus, Dirac's theorem on Hamiltonian cycles \cite{dirac1952some} states that each connected component has a Hamiltonian cycle --- a cycle that visits each node in the component. Taking the permutation matrix that is the sum of these cycles gives us a permutation matrix contained in the support of this graph, so we are done since permutation matrices are doubly stochastic.
\end{proof}
One way in which the graph \eqref{eq:badgraph} occurs is if there is a hub and spoke structure in the graph, in which there are $k$ closely connected nodes forming a hub and all other $n-k$ nodes are distant from each other and are connected to the $k$ nodes in the hub. A top-$k$ graph can certainly have this structure if $\mbf{C}$ is a similarity derived from some euclidean distance. In our case, $\mbf{C}$ is a self-expressive affinity, so it is possible that tighter bounds may be derived under some conditions, but we leave this to future work.

Choosing a top-$k$ support initialization for some $k \geq n/2$ defeats the purpose of the active-set method, as computing a solution in this support would still require $\mc O(n^2)$ computations per iteration. In practice, we choose a small $k$ that leads to a tractable problem size, and add some randomly sampled permutation matrices to the support to guarantee feasibility. 

\subsection{Self-Expressive Computation}\label{sec:se_computation}

For a general dual problem $\eqref{eq:support_dual}$ with restricted support $\mbf{S}$, we need only evaluate $\mbf{C}_{ij}$ for $(i,j) \in \mrm{supp}(\mbf{S})$. These $\mbf{C}_{ij}$ can be computed online in each iteration if memory is an issue, or it can be precomputed and stored in $\mc O(|\mbf{S}|)$ memory for faster objective and subgradient evaluations.

In particular, computing a subset of a dense LSR~\cite{Lu12} solution $\mbf{C}$, or computing it online allows us to scale LSR to datasets with many points like MNIST and EMNIST where the entire dense $n$-by-$n$ $\mbf{C}$ cannot be stored. To see how this is done, note that the solution to LSR with parameter $\gamma$ (i.e. a solution to $\eqref{eq:minC}$ with $\eta_1 = \gamma, \eta_3 = 0$) with no zero diagonal constraint is given as
\begin{equation}
	\mbf{C} = (\mbf{X}^\top \mbf{X} + \gamma \mbf{I})^{-1} \mbf{X}^\top \mbf{X}
\end{equation}
The Woodbury matrix identity \cite{horn2012matrix} gives that this is equal to
\begin{align}
	& \left[\frac{1}{\gamma} \mbf{I} - \frac{1}{\gamma^2}\mbf{X}^\top\left(\mbf{I} + \frac{1}{\gamma}\mbf{X}^\top \mbf{X} \right)^{-1} \mbf{X} \right] \mbf{X}^\top \mbf{X}\\
	& = \frac{1}{\gamma}\mbf{X}^\top \mbf{X} - \frac{1}{\gamma^2}\mbf{X}^\top\left(\mbf{I} + \frac{1}{\gamma} \mbf{X}\mbf{X}^\top\right)^{-1}\mbf{X}\mbf{X}^\top \mbf{X}\\
	& = \mbf{X}^\top \underbrace{\left[\frac{1}{\gamma}\mbf{I} - \frac{1}{\gamma^2}\left(\mbf{I} + \frac{1}{\gamma} \mbf{X}\mbf{X}^\top\right)^{-1}\mbf{X}\mbf{X}^\top \right]}_{\mbf{M}} \mbf{X}
\end{align}
so we may compute $\mbf{C}_{ij} = \mbf{X}_i^\top \mbf{M} \mbf{X}_j$. Note that $\mbf{M}$ is $d$-by-$d$, and we do not ever need to form an $n$-by-$n$ matrix in this computation.

\subsection{Active-set Runtime Comparison}\label{appendix:runtime}

The computation of $\mbf{A}$ in A-DSSC \eqref{eq:jminA} is simply application of the proximal operator for projection onto the doubly stochastic matrices $\boldsymbol{\Omega}_n$. To view it as a Frobenius-norm projection of an input matrix onto the doubly stochastic matrices, note that
\begin{align}
& \argmin_{\mbf{A} \in \boldsymbol \Omega_n} \langle -|\mbf{C}|, \mbf{A} \rangle + \frac{\gamma}{2}\norm{\mbf{A}}_F^2 \\ 
= & \argmin_{\mbf{A} \in \boldsymbol \Omega_n} \norm{\frac{1}{\gamma}|\mbf{C}| - \mbf{A}}_F^2
\end{align}
In the past, algorithms for computing this have used the primal formulation.
Zass and Shashua \cite{zass2007doubly} use an alternating projections algorithm, while Rontsis and Goulart \cite{rontsis2020optimal} use an ADMM based algorithm. As explained in Section~\ref{sec:adssc_compute}, we note that the dual \eqref{eq:dual} is a special case of the dual of a regularized optimal transport problem~\cite{blondel2018smooth}, so it can be computed with the same algorithms that these regularized optimal transport problems use. Our Algorithm~\ref{alg:adssc_active} gives a fast way to compute this dual by leveraging problem sparsity.

Here, we compare the performance of these different methods on various input matrices $|\mbf{C}|$. The different input matrices we consider are:
\begin{itemize}
	\item $\mc D_1$, $|\mbf{C}|$ is the A-DSSC affinity for the Yale-B dataset, so $n=2414$. 
	\item $\mc D_2$, $|\mbf{C}|$ is the A-DSSC affinity for scattered COIL, so $n= 7200$.
	\item $\mc D_3$, $|\mbf{C}|$ is $1/2 (|\mbf{G}| + |\mbf{G}^\top|)$, scaled to have maximum element $1$, where $\mbf{G}$ is a standard Gaussian random matrix of size $2000 \times 2000$. We take $\gamma =  .5$.
	\item $\mc D_4$, $|\mbf{C}|$ is an LSR affinity at $\eta_1 = 1$ for a union of ten 5-dimensional subspaces in $\RR^{15}$, as in \cite{you2016divide}. We take $n=4000$ and $\gamma=.01$.
\end{itemize}
For each experiment, we run the algorithm until the row and column sums of the iterates are all within $10^{-4}$ of $1$. Runtimes are given in Table~\ref{tab:runtimes}. Similarly to previous work \cite{rontsis2020optimal}, we find that alternating projections has serious convergence issues; it also fails to converge to numerically sparse affinities. Our active-set method converges orders of magnitudes faster than the other methods. We emphasize that these performance gains are further amplified on larger datasets, where memory considerations can make it so that the other methods cannot even be run. Moreover, all of these experiments were run on CPU, while our method also has a GPU implementation that allows for very efficient computation on large datasets. For instance, we can compute full A-DSSC on scattered MNIST ($n=70{,}000$) and achieve 99\% clustering accuracy on a GeForce RTX 2080 GPU in less than 30 seconds.

\begin{table}
	\caption{Runtime (in seconds) of different doubly stochastic projection algorithms. The data $\mc D_i$ are defined in Section~\ref{appendix:runtime}. ``NC'' indicates that the algorithm did not converge in 5000 iterations.}
	\label{tab:runtimes}
	\centering
	{
	\begin{tabular}{lcccc}
        \toprule
		Method & $\mc D_1$ & $\mc D_2$ & $\mc D_3$ & $\mc D_4$\\ 
		\midrule
		Alt Proj \cite{zass2007doubly} & NC & NC & 118  & 264 \\
		ADMM \cite{rontsis2020optimal} & 54.1  & NC & 26.2 & 111 \\
		Dual \cite{blondel2018smooth} & 11.8 & 568 & 1.69 & 13.7 \\
		Active-set (ours) &  \bf 1.64  & \bf 12.2 & \bf 0.49 & \bf 2.05 \\
\bottomrule
    \end{tabular}
}
\end{table}
	
\section{J-DSSC Details}

\subsection{Linearized ADMM for J-DSSC}

\begin{algorithm}[ht]
	\caption{J-DSSC}
	\label{alg:jdssc}
	\begin{algorithmic}
		\renewcommand{\algorithmicrequire}{\textbf{Input:}}
		\REQUIRE Data matrix $\mbf{X}$, parameters $\eta_1, \eta_2, \eta_3$, step sizes $\rho$ and $\tau$
		\WHILE {Not converged}
		{\small
		\STATE Update $\mbf{C}_p$ and $\mbf{C}_q$ by linearized ADMM steps \eqref{eq:first_lin} to \eqref{eq:last_lin}
		\STATE Update $\mbf{A}, \mbf{Y},$ and $\mbf{Z}$ by minimization steps \eqref{eq:first_min} to \eqref{eq:last_min}
		\STATE Update $\boldsymbol{\lambda}_1, \boldsymbol{\lambda}_2, \boldsymbol{\Lambda}_1, $ and $\boldsymbol{\Lambda}_2$ by dual ascent steps \eqref{eq:first_ascent} to \eqref{eq:last_ascent}
	}
		\ENDWHILE
		\STATE Apply spectral clustering on Laplacian $\mbf{I} - \frac{1}{2}(\mbf{A}+\mbf{A}^\top)$
		\ENSURE Clustering result
	\end{algorithmic}
\end{algorithm}

Here, we give details on the computation of J-DSSC by linearized ADMM.
We reparameterize the problem \eqref{eq:model}
by introducing additional variables $\mbf{Y} = \mbf{A}$ and $\mbf{Z} = \mbf{X}[\mbf{C}_p - \mbf{C}_q]$. This gives us the equivalent problem
\begin{equation}
\begin{aligned}\label{eq:big_problem}
	& \min_{\mbf{C}_p, \mbf{C}_q, \mbf{A}, \mbf{Y}, \mbf{Z}} \  \frac{1}{2}\norm{\mbf{X}-\mbf{Z}}_F^2 \\ 
	& \quad + \frac{\eta_1}{2}\norm{[\mbf{C}_p + \mbf{C}_q] - \eta_2 \mbf{A}}_F^2 + \eta_3 \norm{\mbf{C}_p + \mbf{C}_q}_1\\
	& \mrm{s.t.} \ \mbf{A} \geq \mbf{0}, \  \mbf{C}_p, \mbf{C}_q \in \RR^{n \times n}_{\geq 0, \mrm{diag}=0},\\
	& \quad \mbf{Y}^\top \mbf{1} = \mbf{Y}\mbf{1} = \mbf{1}, \ \mbf{Y} = \mbf{A}, \ \mbf{Z} = \mbf{X}[\mbf{C}_p-\mbf{C}_q]
\end{aligned}
\end{equation}
The augmented Lagrangian then takes the form:
\begin{equation}
\begin{aligned}
 &  \mc L(\mbf{C}_p, \mbf{C}_q, \mbf{A}, \mbf{Y}, \mbf{Z}, \boldsymbol{\lambda}, \boldsymbol{\Lambda}) = \frac{1}{2}\norm{\mbf{X} - \mbf{Z}}_F^2 \\
 & \ + \frac{\eta_1}{2}\norm{[\mbf{C}_p + \mbf{C}_q] - \eta_2 \mbf{A}}_F^2 + \eta_3 \norm{\mbf{C}_p + \mbf{C}_q}_1 \\
 & \ + \langle \boldsymbol{\lambda}_1, \mbf{Y}^\top\mbf{1}-\mbf{1} \rangle + \langle \boldsymbol{\lambda}_2, \mbf{Y}\mbf{1}-\mbf{1} \rangle + \langle \boldsymbol{\Lambda}_1, \mbf{Y}-\mbf{A} \rangle \\
 & \  + \frac{\rho}{2}\norm{\mbf{Y}^\top\mbf{1} - \mbf{1}}_2^2 + \frac{\rho}{2}\norm{\mbf{Y}\mbf{1} - \mbf{1}}_2^2 + \frac{\rho}{2}\norm{\mbf{Y}-\mbf{A}}_F^2\\
 & \  + \langle \boldsymbol{\Lambda}_2, \mbf{Z} - \mbf{X}[\mbf{C}_p - \mbf{C}_q] \rangle + \frac{\rho}{2}\norm{\mbf{Z} - \mbf{X}[\mbf{C}_p - \mbf{C}_q]}_F^2\\
 & \ + \mc I_{\geq 0}(\mbf{A}) + \mc I_{\geq 0, \mrm{diag}=0}(\mbf{C}_p) + \mc I_{\geq 0, \mrm{diag}=0}(\mbf{C}_q) 
\end{aligned}
\end{equation}
where $\mc I_{S}$ is the indicator function for the set $S$,
which takes values of $\mbf{0}$ in $S$ and $\infty$ outside of $S$,
and where $\rho > 0$ is a chosen constant.

In each iteration, we take a linearized ADMM step in $\mbf{C}_p$ and $\mbf{C}_q$,
then alternatingly minimize over $\mbf{A}, \mbf{Y},$ and $\mbf{Z}$,
and lastly take gradient ascent steps on each of the Lagrange multipliers.
For a step size $\tau > 0$, the linearized ADMM step
over $\mbf{C}_p$ takes the form of a gradient descent step on:
\begin{equation}
\begin{aligned}
	h(\mbf{C}_p) = & \langle \boldsymbol{\Lambda}_2, \mbf{Z}-\mbf{X}[\mbf{C}_p - \mbf{C}_q] \rangle \\
	& \qquad + \frac{\rho}{2}\norm{\mbf{Z} - \mbf{X}[\mbf{C}_p - \mbf{C}_q]}_F^2,
\end{aligned}
\end{equation}
followed by application of a proximal operator. Letting $\mbf{C}_p'$ be the
intermediate value of $\mbf{C}_p$ after the gradient descent step, $\mbf{C}_p$ is 
updated as the solution to
\begin{multline}
	\min_{\mbf{C}_p} \frac{\eta_1}{2}\norm{[\mbf{C}_p + \mbf{C}_q] - \eta_2 \mbf{A}}_F^2 + \eta_3 \norm{\mbf{C}_p + \mbf{C}_q}_1\\
	+ \frac{1}{2\tau}\norm{\mbf{C}_p ' - \mbf{C}_p}_F^2 + \mc I_{\geq 0, \mrm{diag}=0}(\mbf{C}_p)
\end{multline}
For a matrix $\mbf{E}$, let $[\mbf{E}]_+$ be the half-wave rectification of
$\mbf{E}$,
and let $[\mbf{E}]_{+, d=0}$ be the matrix $\mbf{E}$ with all negative entries
and the diagonal set to zero. Then the linearized ADMM updates are given as follows:
{\small
\begin{align}
	\mbf{C}_p' & \gets \mbf{C}_p - \tau \left(-\mbf{X}^\top \boldsymbol{\Lambda}_2 + \rho \mbf{X}^\top(\mbf{X}[\mbf{C}_p - \mbf{C}_q] - \mbf{Z}) \right) \label{eq:first_lin}\\
	\mbf{C}_p & \gets \left[ \frac{1}{\eta_1 + \frac{1}{\tau}}\left(\frac{1}{\tau} \mbf{C}_p' - \eta_1 \mbf{C}_q + \eta_1 \eta_2 \mbf{A} - \eta_3 \mbf{11}^\top \right) \right]_{+, \mrm{d}=0}\\
	\mbf{C}_q' & \gets \mbf{C}_q - \tau(\mbf{X}^\top \boldsymbol{\Lambda}_2 - \rho \mbf{X}^\top(\mbf{X}[\mbf{C}_p - \mbf{C}_q] - \mbf{Z})) \\
	\mbf{C}_q & \gets \left[\frac{1}{\eta_1 + \frac{1}{\tau}}\left(\frac{1}{\tau}\mbf{C}_q' - \eta_1 \mbf{C}_p + \eta_1 \eta_2 \mbf{A} - \eta_3 \mbf{11}^\top\right) \right]_{+, \mrm{d}=0} \label{eq:last_lin}
\end{align}
}
The sequential minimizations over $\mbf{A}$ then $\mbf{Y}$ then $\mbf{Z}$,
holding the other variables
fixed and using the most recent value of each variable, take the form:
\begin{align}
	\mbf{A} & \gets \left[\frac{1}{\eta_1 \eta_2^2 + \rho} (\eta_1 \eta_2 [\mbf{C}_p + \mbf{C}_q] + \boldsymbol{\Lambda}_1 + \rho \mbf{Y}) \right]_+ \label{eq:first_min} \\
	\mbf{Y} & \gets \frac{1}{\rho} \left[ \mbf{V} - \frac{1}{n+1}\mbf{PV 11}^\top - \mbf{11^\top V} \frac{1}{n+1}\mbf{P} \right]\\
			& \text{where } \mbf{V}  = \rho \mbf{A} + 2\rho\mbf{11}^\top - \mbf{1}\boldsymbol{\lambda}_1^\top - \boldsymbol{\lambda}_2\mbf{1}^\top - \boldsymbol{\Lambda}_1\\
			& \hspace{28pt} \mbf{P} = \mbf{I} - \frac{1}{2n+1} \mbf{11}^\top\\
	 \mbf{Z} & \gets \frac{1}{1+\rho}\left(\mbf{X} - \boldsymbol{\Lambda}_2  + \rho \mbf{X}[\mbf{C}_p - \mbf{C}_q]\right) \label{eq:last_min}
 \end{align}
The dual ascent steps on the Lagrange multipliers take the form:
\begin{align}
	\boldsymbol{\lambda}_1 & \gets \boldsymbol{\lambda}_1 + \rho (\mbf{Y}^\top\mbf{1} - \mbf{1}) \label{eq:first_ascent}\\
	\boldsymbol{\lambda}_2 & \gets \boldsymbol{\lambda}_2 + \rho (\mbf{Y}\mbf{1} - \mbf{1})\\
	\boldsymbol{\Lambda}_1 & \gets \boldsymbol{\Lambda}_1 + \rho (\mbf{Y}-\mbf{A})\\
	 \boldsymbol{\Lambda}_2 & \gets \boldsymbol{\Lambda}_2 + \rho (\mbf{Z} - \mbf{X}[\mbf{C}_p - \mbf{C}_q]) \label{eq:last_ascent}
\end{align}
Thus, the full J-DSSC algorithm is given in Algorithm \ref{alg:jdssc}.

To derive the minimization steps for a variable $\mbf{E}$ that is either $\mbf{C}_p, \mbf{C}_q, \mbf{A},$ or $\mbf{Z}$, we use the method of completing the square to change the corresponding minimization into one of the form $\min_{\mbf{E} \in S} \norm{\mbf{E} - \mbf{E}'}_F^2$ for some feasible set $S$ and some matrix $\mbf{E}'$. For instance, in the case of $\mbf{C}_p$, the set $S$ is the set of $n \times n$ nonnegative matrices with zero diagonal, and the solution is $[\mbf{E}']_{+, d=0}$.

To minimize over $\mbf{Y}$, we use different approaches often used
in dealing with row or column sum constraints \cite{zass2007doubly, wang2016structured}.
Define ${\mbf{V} = \rho\mbf{A} + 2\rho\mbf{1}\mbf{1}^\top - \mbf{1}\boldsymbol{\lambda}_1^\top - \boldsymbol{\lambda}_2\mbf{1}^\top - \boldsymbol{\Lambda}_1}$. Setting the gradient of the augmented Lagrangian with respect to $\mbf{Y}$ equal to zero, we can show that
\begin{equation}\label{eq:solveY}
	\rho\mbf{Y} + \rho\mbf{Y}\mbf{1}\mbf{1}^\top + \rho\mbf{1}\mbf{1}^\top\mbf{Y} = \mbf{V}
\end{equation}
Multiplying both sides by $\mbf{1}$ on the right and noting that $\mbf{1}^T\mbf{1} =n$, we have
\begin{align}
	& \rho(\mbf{I} + n\mbf{I} + \mbf{1}\mbf{1}^\top)\mbf{Y}\mbf{1}  = \mbf{V}\mbf{1}\\
	& \rho\mbf{Y}\mbf{1} = (\mbf{I} + n\mbf{I} + \mbf{1}\mbf{1}^\top)^{-1}\mbf{V}\mbf{1} \label{eq:rightOne}
\end{align}
Likewise, multiplying by $\mbf{1}^\top$ on the left gives that
\begin{equation}\label{eq:leftOne}
	\rho\mbf{1}^\top\mbf{Y} = \mbf{1}^\top\mbf{V} (\mbf{I} + n\mbf{I} + \mbf{1}\mbf{1}^\top)^{-1}
\end{equation}
We use the Woodbury matrix identity to compute the inverse
\begin{equation}
	(\mbf{I} + n\mbf{I} + \mbf{1}\mbf{1}^\top)^{-1} = \frac{1}{n+1}\left(\mbf{I} - \frac{1}{2n+1}\mbf{1}\mbf{1}^\top \right).
\end{equation}
We define $\mbf{P} = \mbf{I} - \frac{1}{2n+1}\mbf{1}\mbf{1}^\top$, so the inverse is $\frac{1}{n+1}\mbf{P}$. Thus, using both \eqref{eq:rightOne} and \eqref{eq:leftOne} in the left hand side of \eqref{eq:solveY} gives that
\begin{equation}
	\rho\mbf{Y} + \frac{1}{n+1}\mbf{P}\mbf{V}\mbf{1}\mbf{1}^\top + \mbf{1}\mbf{1}^\top \mbf{V} \frac{1}{n+1}\mbf{P} = \mbf{V}\\
\end{equation}
and thus the minimizing $\mbf{Y}$ is given as
\begin{equation}
	\mbf{Y} = \frac{1}{\rho}\left[\mbf{V} - \frac{1}{n+1}\mbf{P}\mbf{V}\mbf{1}\mbf{1}^\top - \mbf{1}\mbf{1}^\top \mbf{V} \frac{1}{n+1}\mbf{P} \right].
\end{equation}

\section{Implementation Details}

\noindent We implement J-DSSC in the Julia programming language. For A-DSSC, we have separate implementations in Julia and Python, where the Python implementation uses PyTorch for GPU computations.
For the J-DSSC algorithm,
we take the step sizes to be $\rho=.5$ and $\tau=.0001$ for our experiments.

\subsection{Complexity Analysis}

Let $\mbf{X} \in \RR^{d \times n}$, so there are $n$ data points in $\RR^d$. For J-DSSC, each update on $\mbf{C}_p$ and $\mbf{C}_q$ takes $\mc O(dn^2)$ operations due to matrix multiplications. Each update on $\mbf{A}$ and $\mbf{Y}$ takes $\mc O(n^2)$ operations, while each update on $\mbf{Z}$ takes $\mc O(d n^2)$.
The dual ascent steps also take $\mc O(dn^2)$ operations in total.
Thus, each iteration of the linearized ADMM part takes $\mc O(dn^2)$. We
note in particular that J-DSSC does not require the solution of any linear systems
or any singular value decompositions, while many other subspace clustering methods do.

For A-DSSC, solving for $\mbf{C}$ can be done by some efficient EnSC, SSC, or LSR solver. As discussed in Section~\ref{sec:se_computation}, we need only compute $\mbf{C}_{i,j}$ for $(i,j) \in \mrm{supp}(\mbf{S})$ of some support $\mbf{S}$, and this can be done online or as a precomputation step.
The quadratically regularized optimal transport step takes $\mc O(|\mbf{S}|)$ operations per 
objective function evaluation and gradient evaluation.

The spectral clustering step requires a partial eigendecomposition of $\mbf{I}- \frac{1}{2}(\mbf{A}+\mbf{A}^\top)$, although we note that in practice the $\mbf{A}$ that is learned tends to be sparse for the parameter settings that we choose, so we can leverage sparse eigendecomposition algorithms to compute the $k$ eigenvectors corresponding to the smallest eigenvalues. Recall that all subspace clustering methods that use spectral clustering (which is the majority of methods) require computing an eigendecomposition of equivalent size, though not all subspace clustering affinities can leverage sparse eigendecomposition methods.

\subsection{Model Selection}

Our method J-DSSC has three hyperparameters $\eta_1, \eta_2, \eta_3$ to set. If we fix $\mbf{A}$, the problem is simply EnSC \cite{you2016oracle}. This means that we may choose the parameters $\eta_1$ and $\eta_3$ in J-DSSC as is done in EnSC.
As, noted in Section~\ref{sec:jdssc}, $\eta_2$ controls the sparsity of the final $\mbf{A}$, with smaller $\eta_2$ tending to give sparser $\mbf{A}$. Thus, we choose $\eta_2$ such that $\mbf{A}$ is within a desired sparsity range. 
To avoid solutions that are too sparse, we can consider the number of connected components of the graph associated to the solution $\mbf{A}$; if we want to obtain $k$ clusters, we can adjust $\eta_2$ so that the number of connected components is at most $k$.

The parameters for A-DSSC can be selected in an analogous manner to those of J-DSSC. Another interesting consideration is that A-DSSC does very well when $\eta_3 = 0$, meaning that there is no $l_1$ regularization on $\mbf{C}$ and thus $\mbf{C}$ is the solution to an LSR problem \cite{Lu12}. In this case, $\mbf{C}$ can be computed efficiently with one linear system solve, and we need only set the two parameters $\eta_1$ and $\eta_2$.

\begin{table*}[h]
	\centering
\caption{Clustering accuracy comparison with neural network models. `---' indicates that we do not have results for the method. We take results directly from the corresponding papers (besides DSC-Net on UMIST, which we run). Note that this is not under our main paper evaluation framework, as we cannot control postprocessing of neural models \cite{haeffele2021critique}.} 
	{\footnotesize
	\begin{tabular}{cccccccccc}
		\toprule
				& \multicolumn{5}{c}{Neural} &  \multicolumn{2}{c}{Doubly Stochastic} \\
				\cmidrule(lr){2-6}  \cmidrule(lr){7-8}
		Dataset &  DSC-Net \cite{ji2017deep}  & DASC \cite{zhou2018deep} & $k$SCN\cite{zhang2018scalable} & S\supscript{2}ConvSCN \cite{zhang2019self} & MLRDSC-DA \cite{abavisani2020deep} & J-DSSC (Ours) &  A-DSSC (Ours) \\
\midrule
		Yale-B	& .973 & .986 & --- & .985 & \bf .992 & .924 & .917  \\
		COIL-100 & .690 & --- & --- & .733 & .793 & .961 & \bf .984  \\
		ORL	& .860 & .883 & --- &  .895 & \bf .897 &  .785 &  .790  \\
		UMIST & .708 & .769 & --- & --- & --- & .873 & \bf .888 \\
		MNIST & --- & --- & .871 & --- & --- & --- & \bf .990 \\
		\bottomrule
	\end{tabular}
}
\vspace{-1pt}
	\label{tab:neuralresults}
\end{table*}

\section{Proof of Proposition 1}

\begin{proof}
	Suppose that there is an index $(i,j)$ such that 
	\begin{equation}
	a := (\mbf{C}_p^*)_{ij} > 0, \quad b := (\mbf{C}_q^*)_{ij} > 0.
\end{equation}
	Define matrices $\mbf{C}_p'$ and $\mbf{C}_q'$ such that $\mbf{C}_p'$ matches $\mbf{C}_p^*$ at every index but $(i,j)$ and similarly for $\mbf{C}_q'$. At index $(i,j)$, we define 
	\begin{equation}
	(\mbf{C}_p')_{ij} = \max(a-b, 0), \quad (\mbf{C}_q')_{ij} = \max(b-a, 0)
\end{equation}
We consider the value of the expanded J-DSSC objective \eqref{eq:expand}
at $(\mbf{C}_p', \mbf{C}_q', \mbf{A}^*)$ compared to the value at
$(\mbf{C}_p^*, \mbf{C}_q^*, \mbf{A}^*)$.
Note that $\mbf{C}_p' - \mbf{C}_q' = \mbf{C}_p^* - \mbf{C}_q^*$. Thus, there are only three summands
in the objective that are different between the two groups of variables. 
Suppose $a \geq b$. The differences are computed as:
\begin{align}
	& \frac{\eta_1}{2}\norm{\mbf{C}_p' + \mbf{C}_q'}_F^2 - \frac{\eta_1}{2}\norm{\mbf{C}_p^* + \mbf{C}_q^*}_F^2 \nonumber\\
	& \qquad \qquad = \frac{\eta_1}{2}\left(\abs{a-b}^2 - (a+b)^2\right)\\
	& \qquad \qquad = -2\eta_1 ab \nonumber\\
	& -\eta_1\langle  [\mbf{C}_p' + \mbf{C}_q'], \eta_2 \mbf{A}^* \rangle + \eta_1 \langle [\mbf{C}_p^* + \mbf{C}_q^*], \eta_2 \mbf{A}^* \rangle\nonumber\\
	& \qquad \qquad = \eta_1 \eta_2 (\mbf{A}^*)_{ij}\left( -\abs{a-b} + (a+b) \right)\\
	& \qquad \qquad = 2\eta_1 \eta_2 (\mbf{A}^*)_{ij}b \nonumber\\
	& \eta_3 \norm{\mbf{C}_p' + \mbf{C}_q'}_1 - \eta_3 \norm{\mbf{C}_p^* + \mbf{C}_q^*}_1 \nonumber\\
	& \qquad \qquad = \eta_3 \left(\abs{a-b} - (a+b) \right)\\
	& \qquad \qquad = -2\eta_3 b \nonumber
\end{align}
Now, we bound the difference between the objective functions for the two variables:
\begin{align*}
	0 & < -2\eta_1 ab + 2\eta_1 \eta_2 (\mbf{A}^*)_{ij} b - 2\eta_3 b\\
	  & \leq -2\eta_1 ab + 2 \eta_1 \eta_2 b - 2 \eta_3 b
\end{align*}
where the first bound is due to $(\mbf{C}_p^*, \mbf{C}_q^*, \mbf{A}^*)$ being the optimal solution, and the upper bound is due to $(\mbf{A}^*)_{ij} \leq 1$. From these bounds, we have that
\begin{equation}
	b \leq a < \frac{\eta_1 \eta_2 - \eta_3}{\eta_1}.
\end{equation}
Where we recall that we assumed $b \leq a$.
A similar upper bound holds with $b$ taking the place of $a$
in the case where $b > a$. Now, note that $a > 0$ means that
$\eta_1 \eta_2 > \eta_3$, thus proving 1) through the contrapositive.
This bound clearly gives the one in \eqref{eq:overlap_support}, so we have also proven~2).
\end{proof}

\section{Further Experiments}

\subsection{Comparison with Neural Networks}

In Table~\ref{tab:neuralresults}, we list the clustering accuracy of subspace clustering network models against our DSSC models. These results are taken directly from the corresponding papers, which means that they do not follow our common evaluation framework as used in Section~\ref{sec:experiments} of the main paper. Thus, the neural network methods may still use their ad-hoc postprocessing methods, which have been shown to significantly affect empirical performance \cite{haeffele2021critique}. Nonetheless, DSSC still significantly outperforms neural methods on COIL-100, UMIST, and MNIST. In fact, self-expressive neural models generally cannot compute clusterings for datasets on the scale of MNIST or EMNIST, as they require a dense $n$-by-$n$ matrix of parameters for the self-expressive layer.

\subsection{Behavior of DSSC Models}

\begin{figure*}[ht]
	\centering
	\subfloat[$\eta_2= .1$]{\includegraphics[width=0.37\columnwidth]{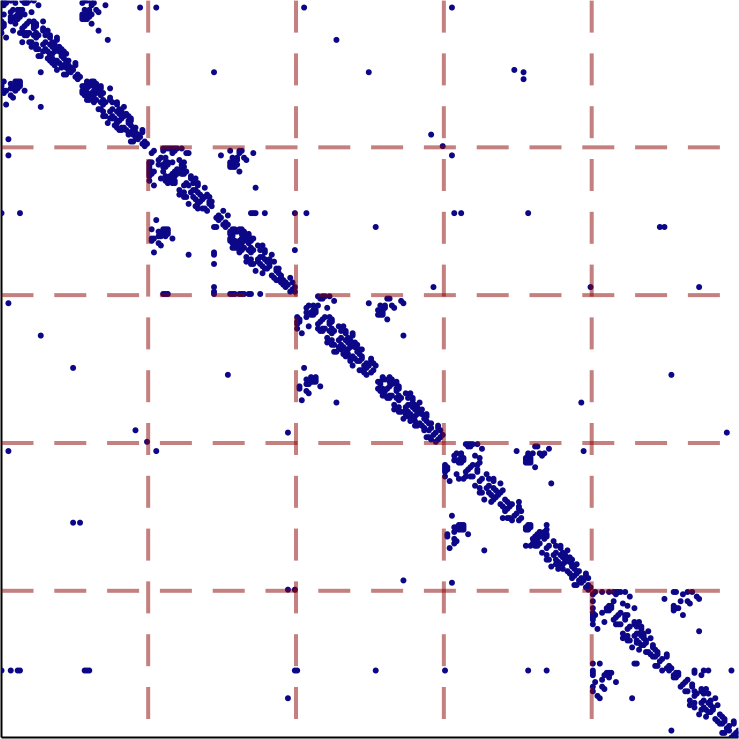}} \hspace{15pt}
	\subfloat[$\eta_2= .2$]{\includegraphics[width=0.37\columnwidth]{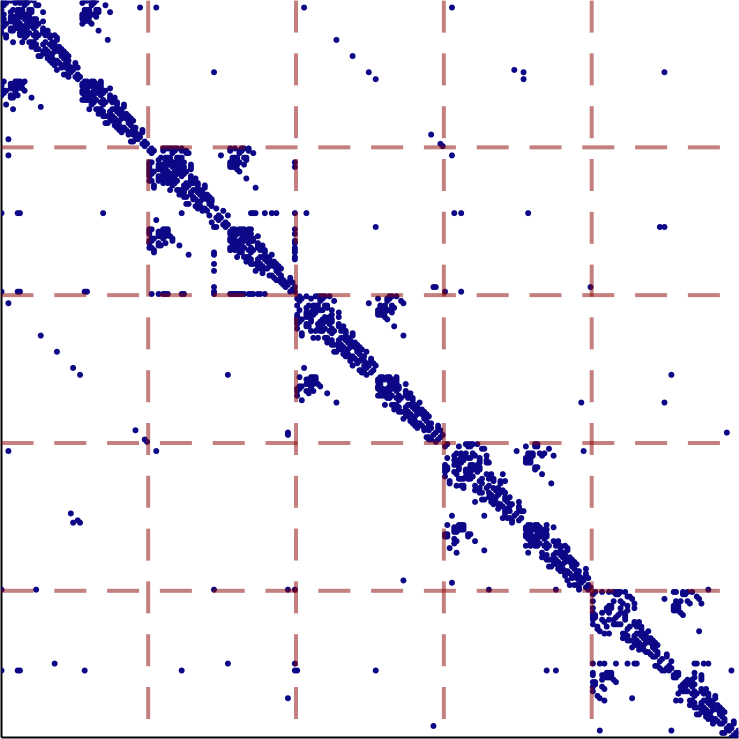}} \hspace{15pt}
	\subfloat[$\eta_2= .5$]{\includegraphics[width=0.37\columnwidth]{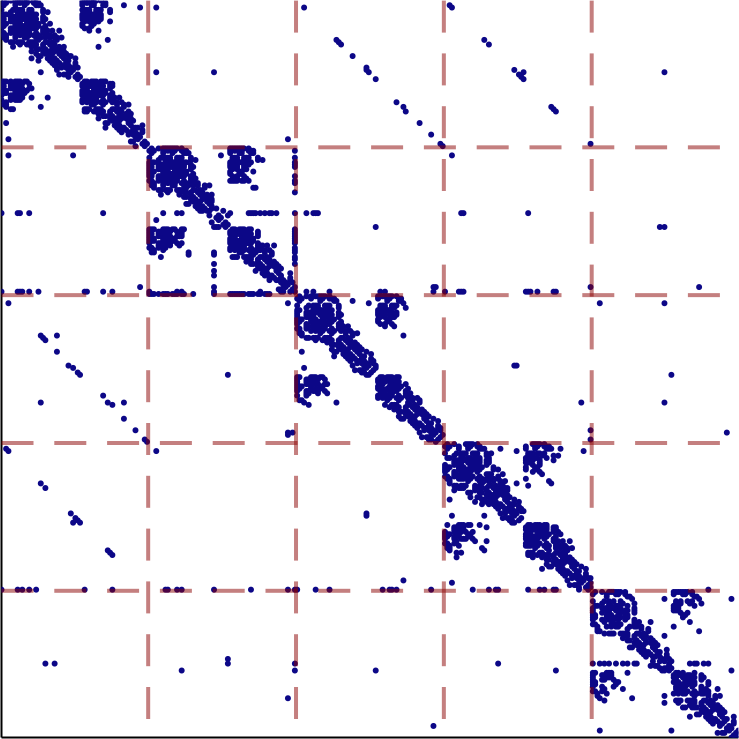}} \hspace{15pt}
	\subfloat[$\eta_2= 1.0$]{\includegraphics[width=0.37\columnwidth]{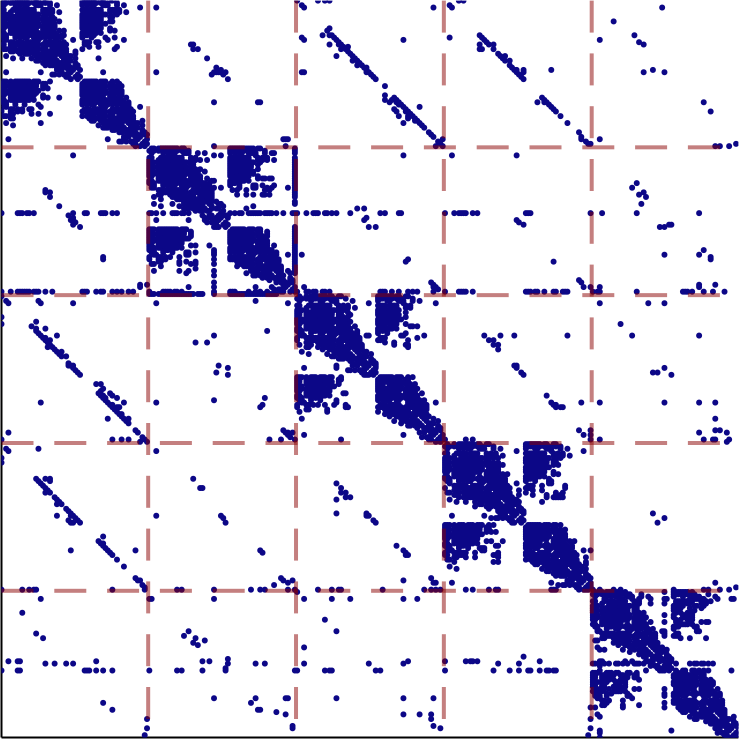}}

	\caption{Affinity matrices $\mbf{A}$ learned by J-DSSC on the first five classes of Yale-B. We fix $\eta_1 = .25$ and $\eta_3 = 0$ (so there is no $l_1$ regularization on $\mbf{C}$), while varying $\eta_2$. Red dashed lines mark the boundaries between classes. Dark blue points mark nonzero entries in $\mbf{A}$. The learned affinity $\mbf{A}$ is sparser as $\eta_2$ decreases, while it is more connected as $\eta_2$ increases.}
	\label{fig:sparsity}
\end{figure*}

\begin{figure}[t]
	\centering
	\includegraphics[width=.48\columnwidth]{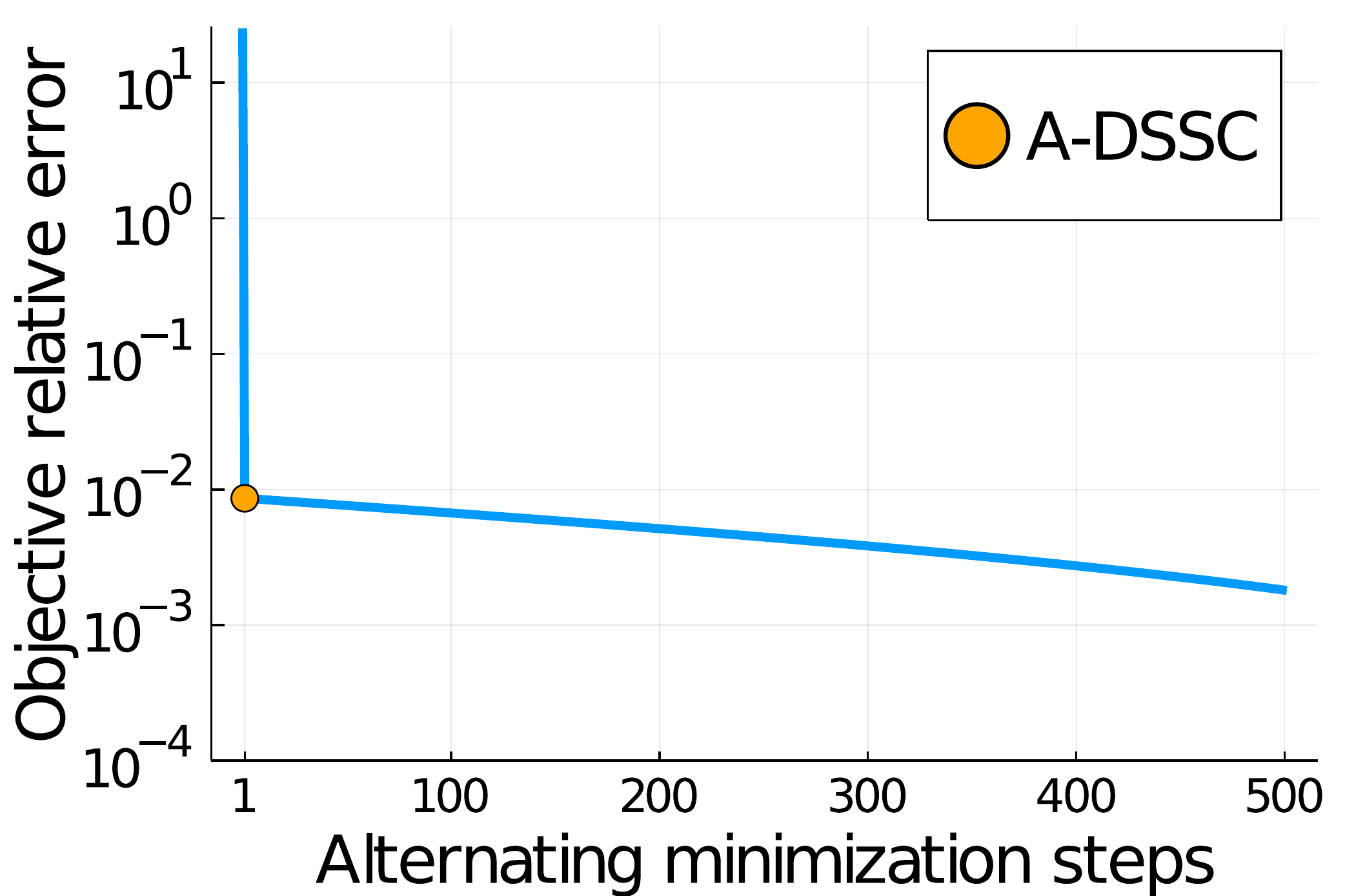} \hfill
	\includegraphics[width=.48\columnwidth]{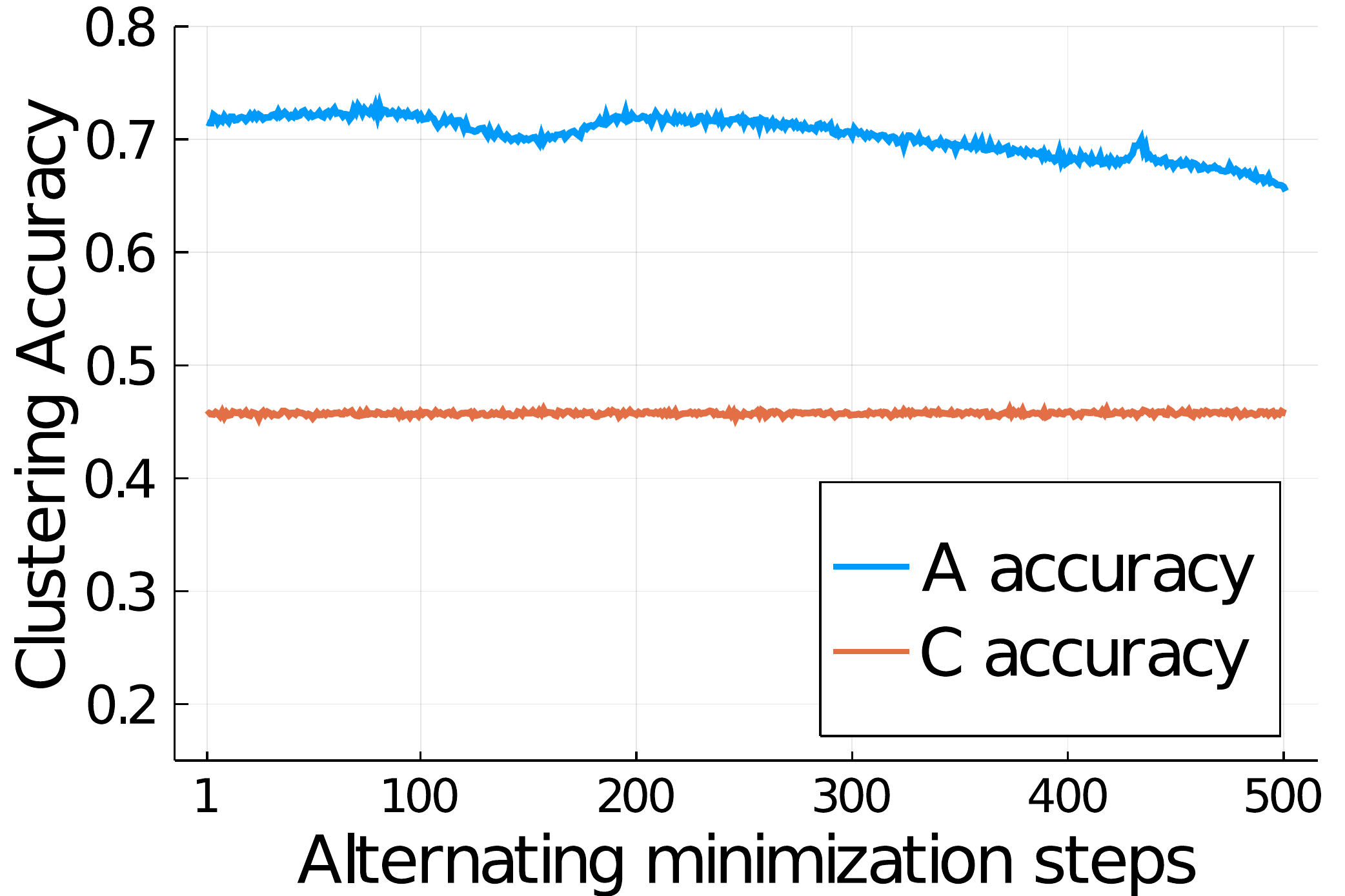}
	\caption{Objective value and clustering accuracy for different number of
	alternating minimization steps over $\mbf{C}$ in \eqref{eq:minC} then $\mbf{A}$ in \eqref{eq:jminA} on the UMIST dataset. Our A-DSSC model is equivalent to 1 alternating minimization step.
	(Left) Shows relative error in the objective value compared to the true minimum.
	(Right) Shows clustering accuracy of using either $\mbf{A}$ or $\mbf{C}$ in spectral clustering.
	}
	\label{fig:objective}
\end{figure}

Here, we explore the quality of our approximation A-DSSC by comparing the natural extension of this approximation to multiple steps of alternating minimization over $\mbf{C}$ in \eqref{eq:minC} and $\mbf{A}$ in \eqref{eq:jminA}. 
In particular, we note that additional alternating minimization steps do not bring a significant benefit over just a single alternating minimization iteration. Figure \ref{fig:objective} displays the results of our A-DSSC model on the UMIST face dataset \cite{graham1998characterising} using the experimental setup described in Section \ref{sec:setup}.  We see that a single alternating minimization step over $\mbf{C}$ then $\mbf{A}$, which is equivalent to our A-DSSC model, already achieves most of the decrease in the objective \eqref{eq:model} --- giving a relative error of less than 1\% compared to the global minimum obtained by the full J-DSSC solution.
Moreover, the clustering accuracy is robust to additional alternating minimization steps beyond a single A-DSSC step ---
additional steps do not increase clustering accuracy from spectral clustering on $\mbf{A}$. Also, spectral clustering on $\mbf{A}$ achieves much higher accuracy than spectral clustering on $\mbf{C}$, thus showing the utility of the doubly stochastic model.

We also investigate the sparsity and connectivity of the affinity $\mbf{A}$ learned by our models.
In Figure \ref{fig:sparsity}, we show sparsity plots
of the learned matrices $\mbf{A}$ for J-DSSC models with varying $\eta_2$
on the first five classes on 
the Extended Yale-B dataset \cite{georghiades2001few}.
The points are sorted by class, so intra-subspace
connections are on the block diagonal, and inter-subspace connections are on the off-diagonal blocks.
Here, it can be seen that the sparsity level of the recovered $\A$ affinity is controlled by choice of the $\eta_2$ parameter even when there is no sparse regularization on $\C$ (i.e., $\eta_3=0$), with the number of nonzeros increasing as $\eta_2$ increases.

\section{Data Details}

\begin{figure}[ht]
	\centering
	\includegraphics[width=.8\columnwidth]{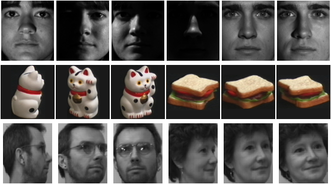}
	\caption{Sample images from Extended Yale-B (top row), COIL-100 (middle row), and UMIST (bottom row).} 
	\label{fig:images}
\end{figure}

\begin{table}[ht]
	\centering
	\caption{Statistics and properties of datasets.}
	\label{tab:dataStats}
	{\small
	\begin{tabular}{lccc}
		\toprule
		Dataset & \# Images ($n$) & Dimension & \# Classes \\
		\midrule
		Yale-B & 2,414 & $48 \times 42$ & 38 \\
		COIL-40 & 2,880 & $32 \times 32$ & 40 \\
		COIL-100 & 7,200 & $32 \times 32$ & 100 \\
		UMIST & 575 & $32 \times 32$ & 20 \\
		ORL & 400 & $32 \times 32$ & 40  \\
		MNIST & 70,000 & $28 \times 28$ & 10 \\
		EMNIST & 145,600 & $28 \times 28$ & 26 \\
		\bottomrule
	\end{tabular}
}
\end{table}

For our experiments, recall that we run subspace clustering on the Extended Yale-B face dataset \cite{georghiades2001few}, Columbia Object Image Library (COIL-40 and COIL-100) \cite{nenecolumbia}, UMIST face dataset \cite{graham1998characterising}, ORL face dataset \cite{samaria1994parameterisation}, MNIST digits dataset \cite{lecun1998gradient}, and EMNIST-Letters dataset \cite{cohen2017emnist}. 
Sample images from some of these datasets are shown in Figure~\ref{fig:images}.
Basic statistics and properties of these datasets are given in Table~\ref{tab:dataStats}.

The scattered data is computed as in \cite{you2016oracle, you2016scalable},
by using a scattering convolution network \cite{bruna2013invariant} 
to compute a feature vector of 3{,}472 dimensions for UMIST, MNIST, and EMNIST, and $3\cdot 3{,}472 = $ 10{,416}
dimensions for COIL (since there are three color channels), then projecting to
500 dimensions by PCA. 
For both pixel data and scattered data, we
normalize each data point to have unit $l_2$ norm for all experiments,
noting that scaling a data point does not change its subspace membership.

For Yale-B, we remove the 18 images labeled as corrupted in the dataset, leaving
2414 images for our experiments. Adding the corrupted images back in does not
appear to qualitatively change our results. We resize Yale-B to size $48 \times 42$.
The COIL, UMIST, and ORL images are resized to size $32 \times 32$.

\section{Parameter Settings}

\begin{table}[ht]
	\centering
	\caption{Parameter settings for J-DSSC and A-DSSC.}
	\label{tab:paramDSSC}
	{\footnotesize
		\begin{tabular}{lcccccc}
		\toprule
		& \multicolumn{3}{c}{J-DSSC} & \multicolumn{3}{c}{A-DSSC}\\
				\cmidrule(lr){2-4} \cmidrule(lr){5-7} 
		Dataset & $\eta_1$ & $\eta_2$ & $\eta_3$ & $\eta_1$ & $\eta_2$ & $\eta_3$\\
		\midrule
		Yale-B & .25 & .2 & 0 & .5 & .1 & 0 \\
		COIL-40 & 25 & .01 & .1 & 25 & .001 & 0  \\
		COIL-40 (Scattered) & .25 & .2 & 0 & 50 & .001 & 0 \\
		COIL-100 & 25 & .01 & .1 & 50 & .0005 & 0 \\
		COIL-100 (Scattered) & .25 & .1 & 0 & .1 & .025 & 0 \\
		UMIST & 1 & .05 & 0 & .5 & .05 & 0 \\
		UMIST (Scattered) & .01 & .2 & 0 & .5 & .01 & 0 \\
		ORL & 1 & .1 & .1 & 1 & .05 & 0\\
		MNIST (Scattered) & N/A & N/A & N/A & 10 & .001 & 0   \\
		EMNIST (Scattered) & N/A & N/A & N/A & 50 & .001 & 0 \\
		\bottomrule
    \end{tabular}
	}
\end{table}

\begin{table}[ht]
	\caption{Parameters searched over for different methods.}
	\label{tab:paramShallow}
	{\small
		\begin{tabular}{lp{5.5cm}}
		\toprule
			Method & Parameters and Variant \\
		\midrule
			SSC \cite{elhamifar2009sparse} & $\gamma \in \{1, 5, 10, 25, 50, 100 \}$, noisy data variant  \\
			\midrule
			EnSC \cite{you2016oracle} & $\gamma \in \{.1, 1, 5, 10, 50, 100 \}$, $\lambda \in \{.9, .95\}$ \\
			\midrule
			LSR \cite{Lu12} & ${\lambda \in \{.01, .1, .5, 1, 10, 50, 100\}}$, $\mrm{diag}(\mbf{C}) = \mbf{0}$ variant \\
			\midrule
			LRSC \cite{vidal2014low} & $\tau \in \{.1, 1, 10, 50, 100, 500, 1000\}, \alpha=\tau/2$, noisy data and relaxed constraint variant\\
			\midrule
			TSC \cite{heckel2015robust} &  $ q \in \{2, 3, \ldots, 14, 15 \}$ \\
			\midrule
			SSC-OMP \cite{you2016scalable} & $k_{\mrm{max}} \in \{2, 3, \ldots, 14, 15 \}$\\
			\midrule
			J-DSSC (Ours) & ${\eta_1 \in \{.01, .25, 1, 25\}}$, ${\eta_2 \in \{.01, .05, .1, .2\} }$, ${\eta_3 \in \{0, .1\}}$\\
			\midrule
			A-DSSC (Ours) & ${\eta_1 \in \{.1, 1, 10, 25, 50\}}$, ${\eta_2 \in \{.0005, .001, .01, .025, .05, .1\}}$, $\eta_3 = 0$ \\
		\bottomrule
    \end{tabular}
	}
\end{table}

Here, we give parameter settings for the experiments in the paper.
The chosen parameter settings for our DSSC models are given in 
Table~\ref{tab:paramDSSC}. We note that the experiments on UMIST
to explore the objective value over iterations of alternating minimization
in Figure~\ref{fig:objective} use the same parameter settings
as in this table.

The parameters searched over for the non-neural methods
are given in Table~\ref{tab:paramShallow}.

For DSC-Net, we use the hyperparameters
and pre-trained autoencoders as in their original paper \cite{ji2017deep}
for Yale-B, COIL-100, and ORL.
For DSC-Net on UMIST, we take a similar architecture to that of \cite{zhou2018deep},
with encoder kernel sizes of $5\times 5, 3 \times 3, 3 \times 3$, as well as 15, 10, 5 channels
per each layer. The regularization parameters are set as $\lambda_1 = 1$ and $\lambda_2 = .2$.
The DSC-Net postprocessing on UMIST is taken to be similar to that used for ORL.
For COIL-40, we use one encoder layer of kernel size $3 \times 3$, 20 channels, and
regularization parameters $\lambda_1 = 1, \lambda_2 = 100$; we also use the same postprocessing
that is used for COIL-100.

\end{appendices}

\end{document}